\journal{International Journal of Approximate Reasoning}
\newtheorem{proposition}{Proposition}
\newtheorem{example}{Example}
\newtheorem{definition}{Definition}
\newtheorem{remark}{Remark}
\newtheorem{theorem}{Theorem}[section]
\newtheorem{corollary}{Corollary}[section]
\newtheorem{assumption}{Assumption}[section]
\newcommand{\tO}{\widetilde{O}}
\newcommand{\bO}{\textbf{O}}
\newcommand{\bL}{\textbf{L}}
\newcommand{\bF}{\textbf{F}}
\newcommand{\bp}{\textbf{p}}
\newcommand{\tm}{\widetilde{m}}
\newcommand{\tL}{\widetilde{L}}
\newcommand{\sfa}{\textsf{a}}
\newcommand{\sfb}{\textsf{b}}
\newcommand{\tsfa}{\widetilde{\textsf{a}}}
\newcommand{\proj}[1]{^{\downarrow #1}}
\begin{document}

\begin{frontmatter}
\title{An Interval-Valued Utility Theory for Decision Making with Dempster-Shafer Belief Functions}

\author[utc,shu,iuf]{Thierry Den\oe ux}
\author[uks]{Prakash P.~Shenoy\corref{cor1}}

\address[utc]{Universit\'e de Technologie de Compi\`egne, CNRS, UMR 7253 Heudiasyc, Compi\`egne, France\\
\url{thierry.denoeux@utc.fr}}
\address[shu]{Shanghai University, UTSEUS, Shanghai, China}
\address[iuf]{Institut Universitaire de France, Paris, France}
\address[uks]{University of Kansas, School of Business, Lawrence, Kansas, USA\\
\url{pshenoy@ku.edu}}

\cortext[cor1]{Corresponding author.}

\begin{abstract}
The main goal of this paper is to describe an axiomatic utility theory for Dempster-Shafer belief function lotteries. The axiomatic framework used is analogous to von Neumann-Morgenstern's utility theory for probabilistic lotteries as described by Luce and Raiffa. Unlike the probabilistic case, our axiomatic framework leads to interval-valued utilities, and therefore, to a partial (incomplete) preference order on the set of all belief function lotteries. If the belief function reference lotteries we use are Bayesian belief functions, then our representation theorem coincides with Jaffray's representation theorem for his linear utility theory for belief functions. We illustrate our representation theorem using some examples discussed in the literature, and we propose a simple model for assessing utilities based on an interval-valued pessimism index representing a decision-maker's attitude to ambiguity and indeterminacy. Finally, we compare our decision theory with those proposed by Jaffray, Smets, Dubois et al., Giang and Shenoy, and Shafer.
\end{abstract}

\begin{keyword}
Dempster-Shafer theory of evidence, von Neumann-Morgenstern's utility theory, interval-valued utility function, Jaffray's linear utility theory, Smets' two-level decision theory, Shafer's constructive decision theory.
\end{keyword}

\end{frontmatter}


\section{Introduction}
\label{sec:intro}
The main goal of this paper is to propose an axiomatic utility theory for lotteries described by belief functions in the Dempster-Shafer (D-S) theory of evidence \cite{Dempster1967, Shafer1976}. The  axiomatic theory is constructed similar to von Neumann-Morgenstern's (vN-M's) utility theory for probabilistic lotteries \cite{vonNeumannMorgenstern1947, HersteinMilnor1953, Hausner1954, LuceRaiffa1957, Jensen1967, Fishburn1982}. Unlike the probabilistic case, our axiomatic theory leads to interval-valued utilities, and therefore to a partial (incomplete) preference order on the set of all belief function lotteries. Also, we compare our decision theory to those proposed by Jaffray \cite{Jaffray1989}, Smets \cite{Smets2002}, Dubois et al. \cite{Duboisetal1999}, Giang and Shenoy \cite{GiangShenoy2005, GiangShenoy2011}, and Shafer \cite{Shafer2016}.

In the foreword to Glenn Shafer's 1976 monograph \cite{Shafer1976}, Dempster writes: ``... I believe that Bayesian inference will always be a basic tool for practical everyday statistics, if only because questions must be answered and decisions must be taken, so that a statistician must always stand ready to upgrade his vaguer forms of belief into precisely additive probabilities.'' More than 40 years after these lines were written, a lot of approaches to decision-making have been proposed (see the recent review in \cite{Denoeux2019}). However, most of these methods lack a strong theoretical basis. The most important steps toward a decision theory in the  D-S framework have been made by Jaffray \cite{Jaffray1989}, Smets \cite{Smets2002} and Shafer \cite{Shafer2016}. However, we argue that these proposals are either not sufficiently justified from the point of view of D-S theory, or not sufficiently developed for practical use. Our goal is to propose and justify a utility theory that is in line with vN-M's utility theory, but adapted to be used with lotteries whose uncertainty is described by D-S belief functions.

In essence, the D-S theory consists of representations--- basic probability assignments (also called mass functions), belief functions, plausibility functions,  etc.---together with Dempster's combination rule, and a rule for marginalizing joint belief functions. The representation part of the D-S theory is also used in various other theories of belief functions. For example, in the imprecise probability community, a belief function is viewed as the lower envelope of a convex set of probability mass functions called a credal set. Credal set semantics are also referred to in the literature as \emph{lower probability interpretation} \cite{Jaffray1991, Jaffray1994, JaffrayWakker1993}, and as \emph{generalized probability} \cite{FaginHalpern1991, HalpernFagin1992}. Using these semantics, it makes more sense to use the Fagin-Halpern combination rule \cite{FaginHalpern1991} (also proposed by de Campos et al. \cite{deCamposetal1990}), rather than Dempster's combination rule \cite{HalpernFagin1992, Shafer1990, Shafer1992}. The utility theory this article proposes is designed specifically for the D-S belief function theory, and not for the  other theories of belief functions. This suggests that Dempster's combination rule should be an integral part of our theory, a property that is not satisfied in the proposals by Jaffray and Smets.

There is a large literature on decision making with a (credal) set of probability mass functions \cite{GilboaMarinacci2016} motivated by Ellsberg's paradox \cite{Ellsberg1961}. An influential work in this area is the axiomatic framework by Gilboa-Schmeidler \cite{GilboaSchmeidler1989}, where they use Choquet integration \cite{Choquet1953, GilboaSchmeidler1994} to compute expected utility. A belief function is a special case of a Choquet capacity. Jaffray's \cite{Jaffray1989} work can also be regarded as belonging to the same line of research, although Jaffray works directly with belief functions without specifying a combination rule. A review of this literature can be found in, e.g., \cite{Gajdosetal2008}, where the authors propose a modification of the Gilboa-Schmeidler \cite{GilboaSchmeidler1989} axioms. As we said earlier, our focus here is on decision-making with D-S theory of belief functions, and not on decision-making based in belief functions with a credal set interpretation. As we will see, our interval-valued utility functions lead to intervals that are contained in the Choquet lower and upper expected utility intervals.

The remainder of this article is as follows. In Section \ref{sec:vNM}, we sketch vN-M's axiomatic utility theory for probabilistic lotteries as described by Luce and Raiffa \cite{LuceRaiffa1957}. In Section \ref{sec:dstheory}, we summarize the basic definitions in the D-S belief function theory. In Section \ref{sec:dsutility}, we describe our adaptation of vN-M's utility theory for lotteries in which uncertainty is described by D-S belief functions. Our assumptions lead to an interval-valued utility function, and consequently, to a partial (incomplete) preference order on the set of all belief function lotteries. We also describe a model for assessments of utilities. In Section \ref{sec:comparison}, we compare our utility theory with those described by  Jaffray \cite{Jaffray1989}, Smets \cite{Smets2002}, Dubois et al. \cite{Duboisetal1999}, Giang and Shenoy \cite{GiangShenoy2005, GiangShenoy2011}, and Shafer \cite{Shafer2016}. Finally, in Section \ref{sec:sandc}, we summarize and conclude.


\section{von Neumann-Morgenstern's Utility Theory}
\label{sec:vNM}
In this section, we describe vN-M's utility theory for decision under risk. Most of the material in this section is adapted from \cite{LuceRaiffa1957}. A decision problem can be seen as a situation in which a decision-maker (DM) has to choose a course of action (or \emph{act}) in some set $\bF$. An act may have different \emph{outcomes}, depending on the \emph{state of nature} $X$. Exactly one state of nature will obtain, but this state is unknown. Denoting by $\Omega_X=\{x_1,\ldots,x_n\}$ the set of states of nature and by $\bO=\{O_1, \ldots, O_r\}$ the set of outcomes\footnote{The assumption of finiteness of the sets $\Omega_X$ and $\bO$ is only for ease of exposition. It is unnecessary for the proof of the representation theorem in this section.}, 
an act can thus be formalized as a mapping $f$ from $\Omega$ to $\bO$. In this section, we assume that uncertainty about the state of nature  is described by a probability mass function (PMF) $p_X$ on $\Omega_X$. In vN-M's original exposition \cite{vonNeumannMorgenstern1947}, probabilities on $\Omega_X$ are assumed to  be objective and  to correspond to long-run frequencies. However, the line of reasoning summarized below is also valid with other interpretations of probabilities, such as additive degrees of belief, provided that probabilities are assumed to have been determined beforehand, independently of the decision problem. For instance, in the constructive approach proposed by Shafer \cite{Shafer2016}, probabilities are constructed by comparing a given problem with a scale of examples in which the truth is generated according to known chances\footnote{Savage \cite{savage51} derives both probabilities and utilities from a set of axioms. This approach will not be considered in this paper.}. 

If the DM selects act $f$, they will get  outcome $O_i$ with probability 
\begin{equation}
\label{eq:prob_lottery}
p_i = \sum_{\{x \in \Omega_X \mid f(x)=O_i\}} p_X(x).
\end{equation}
To each act $f$ thus corresponds a PMF $\bp = (p_1, \ldots, p_r)$  on $\bO$.  We call $L = [\bO, \bp]$ a \emph{probabilistic lottery}. As only one state in $\Omega_X$ will obtain, a probabilistic lottery will result in exactly one outcome $O_i$ (with probability $p_i$), and we suppose that the lottery will not be repeated. Another natural assumption is that two acts that induce the same lottery are equivalent: the problem of expressing preference between acts then boils down to expressing preference between lotteries.

We are thus concerned with a DM who has preferences on $\mathcal{L}$, the set of all probabilistic lotteries on $\bO$, and our task is to find a real-valued \emph{utility function} $u: \mathcal{L} \to \mathbb{R}$ such that  the DM strictly prefers $L$ to $L^\prime$ if and only if $u(L) > u(L^\prime)$, and the DM is indifferent between $L$ and $L^\prime$ if and only if $u(L) = u(L^\prime)$. We write $O_i \succ O_j$ if the DM strictly prefers $O_i$ to $O_j$, write $O_i \sim O_j$ if the DM is indifferent between (or equally prefers) $O_i$ and $O_j$, and write $O_i \succsim O_j$ if the DM either strictly prefers $O_i$ to $O_j$ or is indifferent between the two.

Of course, finding such a utility function is not always possible, unless  the DM's preferences satisfy some assumptions. We can then construct  a utility function that is \emph{linear} in the sense that the utility of a lottery $L = [\bO, \bp]$ is equal to its expected utility $\sum_{i=1}^r p_i\,u(O_i)$, where $O_i$ is regarded as a degenerate lottery where the only possible outcome is $O_i$ with probability 1. In the remainder of this section, we describe a set of assumptions that lead to the existence of such a linear utility function.

\begin{assumption}[Weak ordering of outcomes]
\label{a1p}
For any two outcomes $O_i$ and $O_j$, either $O_i \succsim O_j$ or $O_j \succsim O_i$. Also, if $O_i \succsim O_j$ and $O_j \succsim O_k$, then $O_i \succsim O_k$. Thus, the preference relation $\succsim$ over $\bO$ is a weak order, i.e., it is complete and transitive. 
\end{assumption}

Given Assumption \ref{a1p}, without loss of generality, let us assume that the outcomes are labelled such that 
$O_1 \succsim O_2 \succsim \cdots \succsim O_r$, and to avoid trivialities, assume that $O_1 \succ O_r$.

Suppose that $\textbf{L} = \{L^{(1)}, \ldots, L^{(s)}\}$ is a set of $s$ lotteries, where each of the $s$ lotteries $L^{j} = [\bO, \bp^{(j)}]$ are over outcomes in $\bf{O}$, with PMFs $\bp^{(j)}$ for $j = 1, \ldots, s$. Suppose ${\bf q} = (q_1, \ldots, q_s)$ is a PMF on $\textbf{L}$ such that $q_j > 0$ for $j = 1, \ldots, s$, and $\sum_{j=1}^s q_j = 1$. Then $[\textbf{L}, {\bf q}]$ is called a \emph{compound} lottery whose outcome is exactly one lottery $L^{(i)}$ (with probability $q_i$), and lottery $L^{(i)}$ will result in one outcome $O_j$ (with probability $p^{(i)}_j$). Notice that the PMF $\bp^{(i)}$ is a conditional PMF for $\bO$ in the second stage given that lottery $L^{(i)}$ is realized (with probability $q_i > 0$) in the first stage (see Figure \ref{fig:redcomplot}). We can compute the joint PMF for $(\textbf{L}, \bO)$, and then compute the marginal $\bp$ of the joint for $\bO$. The following assumption states that the resulting lottery $[\bO,\bp]$ is indifferent to the compound lottery $[\textbf{L}, {\bf q}]$.

\begin{assumption}[Reduction of compound lotteries] 
\label{a2p}
Any compound lottery $[\bf{L}$, $\bf{q}]$, where $L^{(i)} = [\bO, \bp^{(i)}]$, is indifferent to a simple $($non-compound$)$ lottery $[\bf{O}$, $\bf{p}]$, where
\begin{equation}
p_i = q_1\,p_i^{(1)} + \ldots + q_s\,p_i^{(s)}
\end{equation}
for $i = 1, \ldots, r$. PMF $(p_1, \ldots, p_r)$ is the marginal for $\bO$ of the joint PMF of $(\textbf{L},\bO)$.  
\end{assumption}

\begin{figure}
\centering
\includegraphics[width=4.5in]{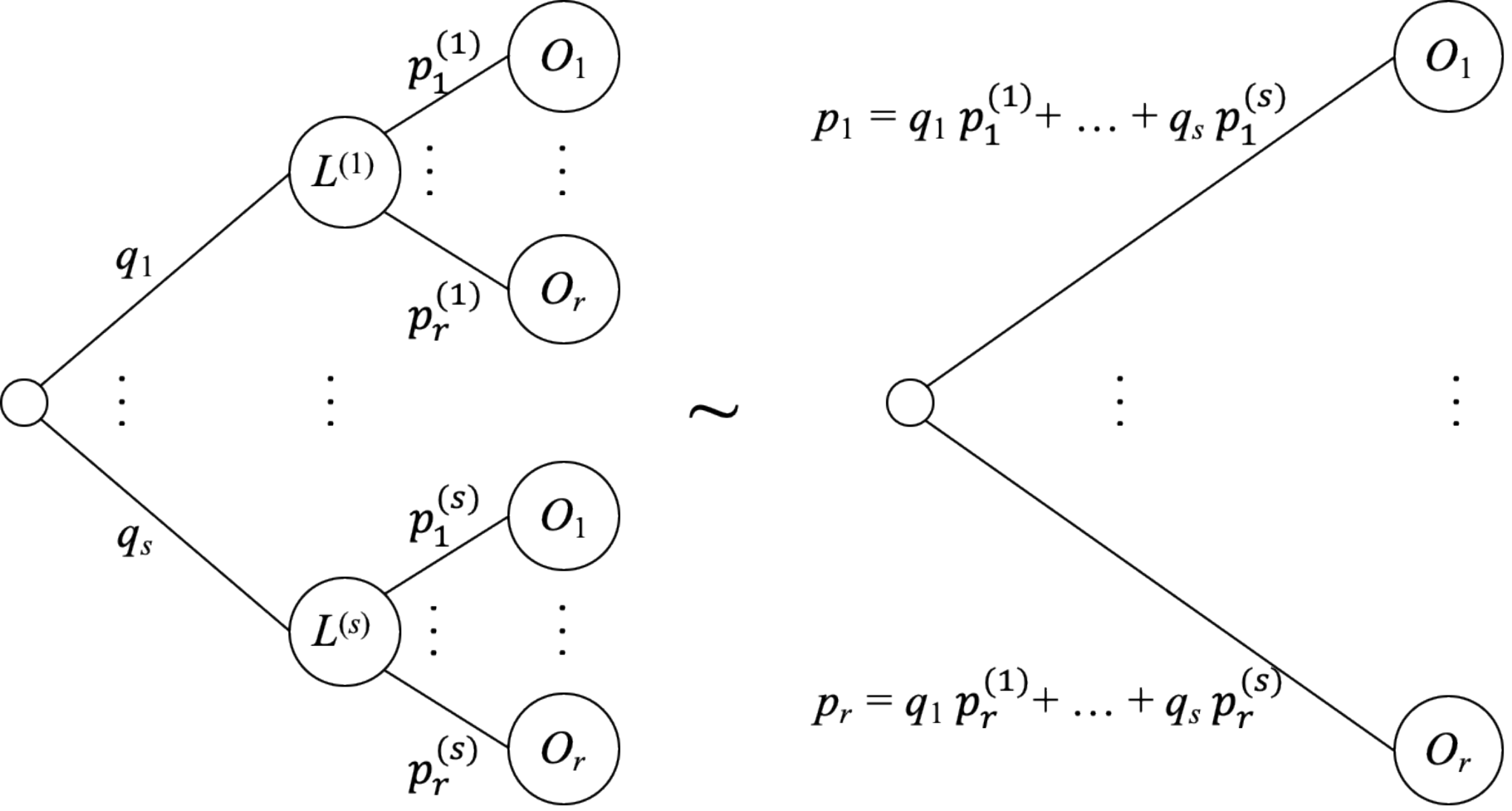}
\caption{A two-stage compound lottery reduced to an indifferent simple lottery}
\label{fig:redcomplot}
\end{figure}

A simple lottery involving only outcomes $O_1$ and $O_r$ with PMF $(u, 1-u)$, where $0 \le u \le 1$, is called a \emph{reference} lottery, 
and is denoted by $[\{O_1, O_r\},(u, 1-u)]$. Let $\bO_2$ denote the set $\{O_1, O_r\}$.

\begin{assumption}[Continuity] 
\label{a3p}
Each outcome $O_i$ is indifferent to a reference lottery \\$\widetilde{O}_i = [\bO_2, (u_i, 1-u_i)]$ for some $u_i$, 
where $0 \le u_i \le 1$, i.e., $O_i \sim \widetilde{O}_i$. 
\end{assumption}

\begin{assumption}[Weak order]
\label{a4p}
 The preference relation $\succsim$ for lotteries in $\mathcal{L}$ is a weak order, i.e., it is complete and transitive. 
\end{assumption}

Assumption \ref{a4p} generalizes Assumption \ref{a1p} for outcomes, which can be regarded as degenerate lotteries.

\begin{assumption}[Substitutability]
\label{a5p}
In any lottery $L = [\bO, \bp]$, if we substitute an outcome $O_i$ by the reference lottery 
$\widetilde{O}_i = [\bO_2, (u_i, 1-u_i)]$ that is indifferent to $O_i$, then the result is a compound lottery that is indifferent to $L$ 
$($see Figure \ref{fig:substitutability}$)$, i.e, 
\begin{equation*}
[(O_1, \ldots, O_{i-1}, O_i, O_{i+1}, \ldots, O_r), \bp] \sim [(O_1, \ldots, O_{i-1}, \widetilde{O}_i, O_{i+1}, \ldots, O_r), \bp].
\end{equation*}
\end{assumption}

\begin{figure}
\centering
\includegraphics[width=4.5in]{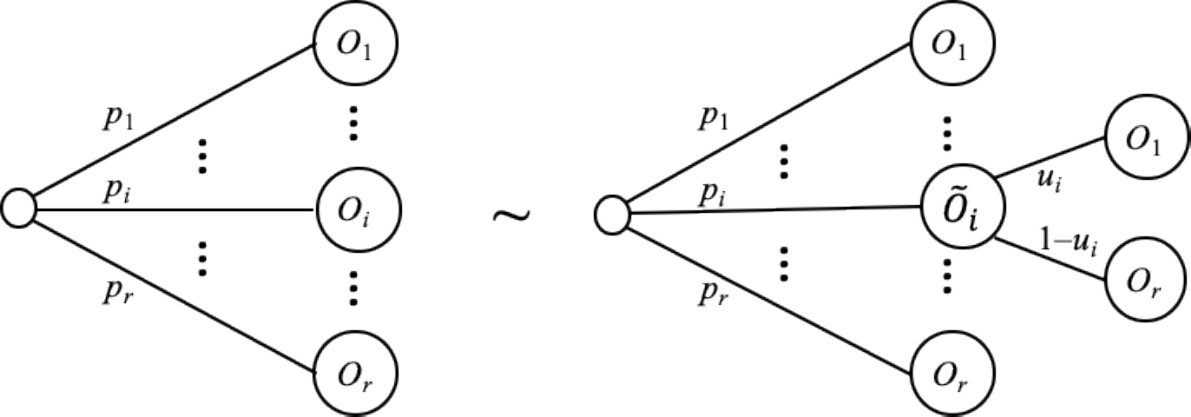}
\caption{The substitutability assumption \ref{a5p}}
\label{fig:substitutability}
\end{figure}

From Assumptions \ref{a1p}--\ref{a5p}, given any lottery $L = [\bO, \bp]$, it is possible to find a reference lottery $\tL=[\bO_2, (u, 1-u)]$ that is indifferent to $L$ (see Figure \ref{fig:reflot}). This is expressed by Theorem \ref{th:redbflottery} below.

\begin{theorem} [Reducing a lottery to an indifferent  reference lottery]
\label{th:redbflottery}
Under Assumptions \ref{a1p}-\ref{a5p}, any  lottery $L = [\bO, \bp]$   is indifferent to a reference  lottery $\tL=[\bO_2, (u, 1-u)]$ with
\begin{equation}
\label{eq:u}
u = \sum_{i=1}^r p_i\,u_i.
\end{equation}
\end{theorem}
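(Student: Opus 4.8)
The plan is to turn $L$ into an equivalent reference lottery in two moves: first replace every outcome by its indifferent reference lottery via substitutability, and then collapse the resulting two-stage lottery via reduction, reading off $u$ from the marginal probability of $O_1$.

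First I would use Assumption \ref{a3p} to fix, for each $i$, a reference lottery $\widetilde{O}_i = [\bO_2, (u_i, 1-u_i)]$ with $O_i \sim \widetilde{O}_i$; in particular $u_1 = 1$ and $u_r = 0$, since $O_1$ and $O_r$ are themselves degenerate reference lotteries. Discarding any outcomes with $p_i = 0$ (which changes neither $L$ nor the eventual sum), I would start from $L = [(O_1, \ldots, O_r), \bp]$ and apply the substitutability Assumption \ref{a5p} to replace $O_1$ by $\widetilde{O}_1$, then $O_2$ by $\widetilde{O}_2$, and so on, each single substitution producing a lottery indifferent to the previous one. Chaining these indifferences by transitivity of $\succsim$ (the weak-order Assumption \ref{a4p}) gives
\begin{equation*}
L \sim [(\widetilde{O}_1, \ldots, \widetilde{O}_r), \bp],
\end{equation*}
a two-stage compound lottery whose first stage selects $\widetilde{O}_i$ with probability $p_i$ and whose second stage yields an outcome in $\bO_2 = \{O_1, O_r\}$.

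Next I would apply the reduction Assumption \ref{a2p} to this compound lottery. Since $\widetilde{O}_i$ assigns probability $u_i$ to $O_1$ and $1 - u_i$ to $O_r$, the marginal probability of $O_1$ in the reduced simple lottery is $\sum_{i=1}^r p_i\,u_i$ and that of $O_r$ is $\sum_{i=1}^r p_i\,(1 - u_i) = 1 - \sum_{i=1}^r p_i\,u_i$. As a convex combination of numbers in $[0,1]$, the quantity $u = \sum_{i=1}^r p_i\,u_i$ again lies in $[0,1]$, so $\tL = [\bO_2, (u, 1-u)]$ is a legitimate reference lottery, and the compound lottery is indifferent to it. A final appeal to transitivity of $\sim$ then yields $L \sim \tL$, as claimed.

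The step demanding the most care is the iterated use of substitutability, since Assumption \ref{a5p} is phrased for substituting a single outcome inside a lottery, whereas after the first substitution the lottery already carries compound structure. This is handled by viewing the object after $k$ substitutions as a lottery over the mixed list $(\widetilde{O}_1, \ldots, \widetilde{O}_k, O_{k+1}, \ldots, O_r)$ and substituting the next degenerate outcome $O_{k+1}$, each such single step being a genuine instance of Assumption \ref{a5p}; transitivity then stitches the $r$ indifferences into one. Restricting to the support of $\bp$ beforehand also secures the positivity $q_j > 0$ required by the reduction axiom.
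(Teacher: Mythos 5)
Your proof follows essentially the same route as the paper's: replace each outcome $O_i$ by its indifferent reference lottery $\widetilde{O}_i$ (Assumption \ref{a3p} for existence, Assumption \ref{a5p} for substitution, Assumption \ref{a4p} applied serially to chain the indifferences), then collapse the resulting compound lottery via Assumption \ref{a2p} to read off $u = \sum_{i=1}^r p_i\,u_i$. The additional care you take---restricting to the support of $\bp$ to satisfy the positivity requirement of the reduction axiom, and spelling out why the iterated single-outcome substitutions are legitimate---merely makes explicit details the paper's proof leaves implicit.
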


\begin{proof}(\cite{LuceRaiffa1957})
First, we replace each $O_i$ by $\widetilde{O}_i$ for $i = 1, \ldots, r$. Assumption \ref{a3p} (continuity) states that these 
indifferent lotteries exist, and Assumption \ref{a5p} (substitutability) says that they are substitutable without changing the preference relation. 
So by using Assumption \ref{a4p} serially, $[\bO, \bp] \sim [\widetilde{\bO}, \bp]$. Now if we apply Assumption \ref{a2p} 
(reduction of compound lotteries), then $[\bO, \bp] \sim [\bO_2, (u, 1-u)]$, where $u$ is given by Eq. \eqref{eq:u}.
\end{proof}

\begin{figure}
\centering
\includegraphics[width=6in]{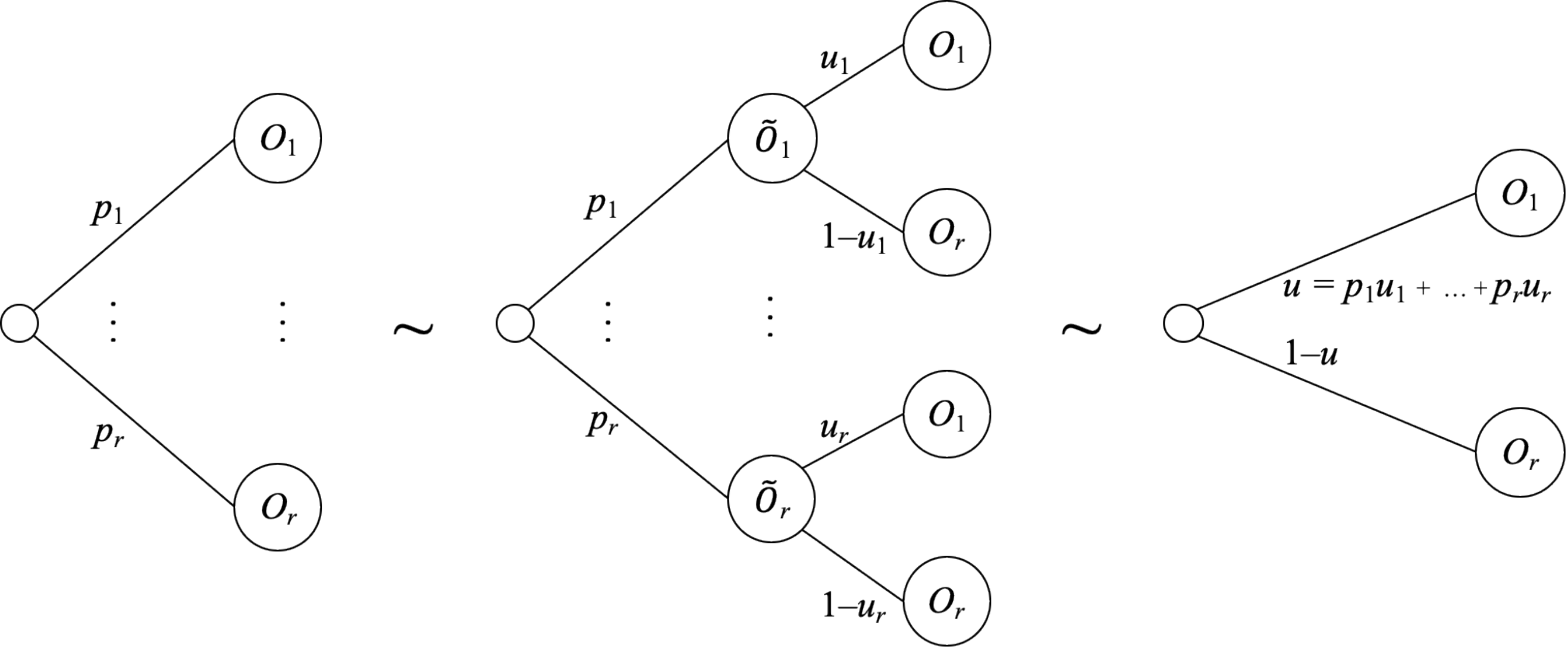}
\caption{Reducing a lottery to an indifferent compound lottery and then to an indifferent reference lottery}
\label{fig:reflot}
\end{figure}

\begin{assumption}[Monotonicity] 
\label{a6p}
A reference lottery $L = [\bO_2, (u, 1-u)]$ is preferred or indifferent to reference lottery $L^{\prime} = [\bO_2, (u^\prime, 1-u^\prime)]$ if and only if $u \ge u^\prime$. 
\end{assumption}

As $O_1 \sim \tO_1=[\bO,(u_1,1-u_1)]$ and $O_r \sim \tO_r=[\bO,(u_r,1-u_r)]$, Assumptions \ref{a4p} and \ref{a6p} imply that $u_1=1$ and $u_r=0$. Also, from $O_1 \succsim O_2 \succsim \cdots \succsim O_r$, we can deduce that $1=u_1\ge u_2\ge \cdots \ge u_r=0$.

Assumptions \ref{a1p}--\ref{a6p}  allow us to define the utility of a lottery as the probability of the best outcome $O_1$ in an indifferent reference lottery, and this utility function for lotteries on $\bO$  is linear. This is stated by the following theorem.

\begin{theorem}[\cite{LuceRaiffa1957}]
If the preference relation $\succsim$ on $\mathcal{L}$ satisfies Assumptions \ref{a1p}--\ref{a6p}, then there are numbers  $u_i$ associated with outcomes $O_i$ for $i = 1, \ldots, r$, such that for any two lotteries $L = [\bf{O}, \bf{p}]$, and $L^\prime = [\bf{O}, \bf{p}^\prime]$, $L \succsim L^\prime$ if and only if
\begin{equation}
\sum_{i=1}^r p_i\,u_i \ge \sum_{i=1}^r p_i^\prime\,u_i.
\end{equation}
Thus, we can define the utility of lottery $L = [\bO, \bp]$ as $u(L) = \sum_{i=1}^r p_i \, u_i$, where $u_i = u(O_i)$. Also, such a linear utility function is unique up to a strictly increasing affine transformation, i.e., if $u_i^\prime = a\,u_i + b$, where $a >0$ and $b$ are real constants, then $u(L) = \sum_{i=1}^r p_i\,u_i^\prime$ also qualifies as a utility function.
\end{theorem}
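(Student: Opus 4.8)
The plan is to reduce each lottery to an indifferent reference lottery and then read off the preference from the single reference probability. First I would apply Theorem \ref{th:redbflottery} twice: it produces a reference lottery $\tL = [\bO_2, (u, 1-u)]$ with $u = \sum_{i=1}^r p_i\,u_i$ satisfying $L \sim \tL$, and a reference lottery $\tL^\prime = [\bO_2, (u^\prime, 1-u^\prime)]$ with $u^\prime = \sum_{i=1}^r p_i^\prime\,u_i$ satisfying $L^\prime \sim \tL^\prime$. Since $\succsim$ is a weak order on $\mathcal{L}$ (Assumption \ref{a4p}), transitivity applied to $\tL \sim L$ and $L^\prime \sim \tL^\prime$ gives $L \succsim L^\prime \iff \tL \succsim \tL^\prime$, and monotonicity (Assumption \ref{a6p}) gives $\tL \succsim \tL^\prime \iff u \ge u^\prime$. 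Chaining these equivalences yields $L \succsim L^\prime \iff \sum_{i=1}^r p_i\,u_i \ge \sum_{i=1}^r p_i^\prime\,u_i$. Defining $u(L) = \sum_{i=1}^r p_i\,u_i$, and observing that $u(O_i) = u_i$ when $O_i$ is read as the degenerate lottery placing probability $1$ on $O_i$, exhibits the required linear utility representing $\succsim$ (with the strict and indifference cases coming from the strict-inequality and equality cases of the same chain).

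For uniqueness up to a strictly increasing affine transformation I would argue in both directions. Sufficiency is a direct computation: if $u_i^\prime = a\,u_i + b$ with $a > 0$, then using $\sum_{i=1}^r p_i = 1$,
\begin{equation*}
\sum_{i=1}^r p_i\,u_i^\prime = a \sum_{i=1}^r p_i\,u_i + b = a\,u(L) + b,
\end{equation*}
so the transformed functional is a strictly increasing affine image of $u(L)$ and therefore induces the same order. For necessity, suppose $v$ is any linear utility representing $\succsim$, with $v_i = v(O_i)$. Continuity (Assumption \ref{a3p}) gives, for each $i$, an indifference $O_i \sim [\bO_2, (\lambda_i, 1-\lambda_i)]$, and monotonicity (Assumption \ref{a6p}) makes $\lambda_i$ unique and determined by $\succsim$ alone, independently of which utility we use. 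Linearity of both $u$ and $v$ then forces $u_i = \lambda_i\,u_1 + (1-\lambda_i)\,u_r$ and $v_i = \lambda_i\,v_1 + (1-\lambda_i)\,v_r$; eliminating $\lambda_i$ gives $v_i = a\,u_i + b$ with $a = (v_1 - v_r)/(u_1 - u_r)$ and $b = v_r - a\,u_r$, and $O_1 \succ O_r$ forces $u_1 > u_r$ and $v_1 > v_r$, hence $a > 0$.

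I expect the representation half to be essentially immediate once Theorem \ref{th:redbflottery} is in hand, since it does the real work of collapsing an arbitrary lottery to a one-parameter reference lottery and monotonicity linearly orders those. The only step demanding genuine care is the necessity direction of uniqueness: one must first establish that the indifference probabilities $\lambda_i$ are intrinsic to the preference relation—a consequence of continuity together with the strict monotonicity of reference lotteries—before the algebraic elimination that produces the affine constants is legitimate.
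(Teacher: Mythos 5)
Your proof is correct and follows exactly the route the paper intends: the paper states this theorem with only a citation to Luce and Raiffa and gives no proof of its own, but its preceding development (Theorem \ref{th:redbflottery} plus the monotonicity Assumption \ref{a6p}) is precisely the scaffolding you use, and your representation argument via reduction to indifferent reference lotteries and transitivity is the standard one. Your uniqueness argument---sufficiency by direct computation using $\sum_i p_i = 1$, and necessity by showing the indifference probabilities $\lambda_i$ are intrinsic to $\succsim$ and then eliminating them---is also the classical argument and is sound.
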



\section{Basic Definitions in the D-S Belief Function Theory}
\label{sec:dstheory}
In this section, we review the basic definitions in the D-S theory of belief functions. Like various uncertainty theories, D-S belief function theory includes functional representations of uncertain knowledge, and basic operations for making inferences from such knowledge. These will be recalled, respectively, in Section \ref{subsec:representations} and \ref{subsec:operations}. Conditional belief functions and the notion of conditional embedding are then introduced in Section \ref{subsec:conditional_bf}, and the semantics of belief functions in D-S theory is discussed in Section \ref{subsec:semantics}. Most of the material in this section (except Section \ref{subsec:semantics}) is taken from \cite{JirousekShenoy2020}. For further details, the reader is referred to \cite{Shafer1976} and to \cite{Denoeuxetal2019} for a recent review.

\subsection{Representations of belief functions}
\label{subsec:representations}

Belief functions can be represented in several different ways, including as basic probability assignments, plausibility functions and belief functions\footnote{
Belief functions can also be mathematically represented by a convex set of PMFs called a \emph{credal set}, 
but the semantics of such a representation are incompatible with Dempster's combination rule 
\cite{Shafer1981, Shafer1990, Shafer1992, HalpernFagin1992}. For these reasons, we skip a credal set representation of a belief function.
}. These are briefly discussed below.

\begin{definition}[Basic Probability Assignment]
Suppose $X$ is an unknown quantity $($variable$)$ with possible values $($states$)$ in a  finite set $\Omega_{X}$ called the state space of $X$. We assume that $X$ takes one and only one value in $\Omega_X$, but this value is unknown. Let $2^{\Omega_{X}}$ denote the set of all subsets of $\Omega_{X}$. A basic probability assignment $($BPA$)$ $m_{X}$ for $X$ is a function $m_{X}:2^{\Omega_{X}} \to [0, 1]$ such that
\begin{equation}
\sum_{\sfa \subseteq \Omega_{X}} m_{X}(\sfa) = 1, \quad \text{and}\quad m_X(\emptyset)=0.
\end{equation}
\label{def:bpa}
\end{definition}

The  subsets $\sfa \subseteq \Omega_{X}$ such that $m_{X}(\sfa) > 0$ are called \emph{focal sets} of $m_{X}$. 
An example of a BPA for $X$ is the \emph{vacuous} BPA for $X$, denoted by $\iota_{X}$, such that $\iota_{X}(\Omega_{X}) = 1$. 
We say that $m_{X}$ is \emph{deterministic} if $m_{X}$ has a single focal set (with mass 1). Thus, the vacuous BPA for $X$ is 
deterministic with focal set $\Omega_{X}$. If all focal sets of $m_{X}$ are singleton subsets (of $\Omega_{X}$), 
then we say that $m_X$ is \emph{Bayesian}. In this case, $m_{X}$ is equivalent to the PMF $P_{X}$ for $X$ such that $P_{X}(x) = m_{X}(\{x\})$ 
for each $x \in \Omega_{X}$.

\begin{definition}[Plausibility Function]
The information in a BPA $m_{X}$ can be represented by a corresponding plausibility function $Pl_{m_{X}}$  defined as follows:
\begin{equation}
\label{plausibility}
Pl_{m_{X}}(\sfa) = \sum_{\{\textsf{b} \subseteq \Omega_{X} \mid \textsf{b} \cap \sfa \neq \emptyset\}} m_{X}(\textsf{b}) 
\quad\text{for all $\sfa \subseteq \Omega_{X}$.}
\end{equation}
\end{definition}

For an example, suppose $\Omega_{X} = \{x, \bar{x}\}$. Then, the plausibility function $Pl_{\iota_{X}}$ corresponding to BPA $\iota_{X}$ 
is given by $Pl_{\iota_{X}}(\emptyset) = 0$, $Pl_{\iota_{X}}(\{x\}) = 1$, $Pl_{\iota_{X}}(\{\bar{x}\}) = 1$, and $Pl_{\iota_{X}}(\Omega_{X}) = 1$.

\begin{definition}[Belief Function]
The information in a BPA $m_X$ can also be represented by a corresponding belief function $Bel_{m_{X}}$ that is defined as follows:
\begin{equation}
\label{belief}
Bel_{m_{X}}(\sfa) = \sum_{\{\textsf{b} \subseteq \Omega_{X} \mid \textsf{b} \subseteq \sfa\}} m_{X}(\textsf{b})
\quad\text{for all $\sfa \subseteq \Omega_{X}$.}
\end{equation} 
\label{def:bel}
\end{definition}

For the example above with $\Omega_{X} = \{x, \bar{x}\}$, the belief function $Bel_{\iota_{X}}$ corresponding to BPA $\iota_{X}$ 
is given by $Bel_{\iota_{X}}(\emptyset) = 0$, $Bel_{\iota_{X}}(\{x\}) = 0$, $Bel_{\iota_{X}}(\{\bar{x}\}) = 0$, and $Bel_{\iota_{X}}(\Omega_{X}) = 1$. For any proposition
$\sfa \in 2^{\Omega_X}$, it is easy to see that $Bel_{m_X}(\sfa) \le Pl_{m_X}(\sfa)$.
Thus, if a DM's belief in proposition $\sfa$ is an interval, say $[p, p+q]$, where $p, q \ge 0$ and $p + q \le 1$, then such beliefs can be 
represented by a BPA $m_X$ such that $m_X(\sfa) = p$, $m_X(\Omega_X \setminus \sfa) = 1-p-q$, and $m_X(\Omega_X) = q$. 
For such a BPA, $Bel_{m_X}(\sfa) = p \le p + q = Pl_{m_X}(\sfa)$.

All three representations---BPA, belief and plausibility functions---have exactly the same information, as any one of them allows us to recover the other two \cite{Shafer1976}.

Next, we describe the two main operations for making inferences.

\subsection{Basic operations in the D-S theory}
\label{subsec:operations}

There are two main operations in the D-S theory---Dempster's combination rule and marginalization.

\paragraph{Dempster's Combination Rule}
In the D-S theory, we can combine two BPAs $m_{1}$ and $m_{2}$  representing distinct pieces of evidence by Dempster's rule \cite{Dempster1967} 
and obtain the BPA $m_{1} \oplus m_{2}$, which represents the combined evidence. 
Dempster refers to this rule as the product-intersection rule, as the product of the BPA values are assigned to the intersection of the focal sets, 
followed by normalization. Normalization consists of discarding the mass assigned to $\emptyset$, and normalizing the remaining values 
so that they add to 1. In general, Dempster's rule of combination can be used to combine two BPAs for arbitrary sets 
of variables. 

Let $\mathcal{X}$ denote a finite set of variables. The state space of $\mathcal{X}$ is $\bigtimes_{X \in \mathcal{X}} \Omega_{X}$. 
Thus, if $\mathcal{X} = \{X, Y\}$ then the state space of $\{X, Y\}$ is 
$\Omega_{X} \times \Omega_{Y}$. Projection of states simply means dropping extra coordinates; for example, if $(x,y)$ is a state of $\{X,Y\}$, 
then the projection of $(x, y)$ to $X$, denoted by $(x, y)^{\downarrow X}$, is simply $x$, which is a 
state of $X$. Projection of subsets of states is achieved by projecting every state in the subset. 
Suppose $\textsf{b} \in 2^{\Omega_{\{X, Y\}}}$. Then $\textsf{b}^{\downarrow X} = \{x \in \Omega_{X}: (x, y) \in \textsf{b}\}$.
Notice that $\textsf{b}^{\downarrow X} \in 2^{\Omega_{X}}$.

Vacuous extension of a subset of states of $\mathcal{X}_{1}$ to a subset of states of $\mathcal{X}_{2}$, 
where $\mathcal{X}_{2} \supseteq \mathcal{X}_{1}$, is a cylinder set extension, i.e., if $\mathsf{a} \in 
2^{\mathcal{X}_{1}}$, then $\sfa^{\uparrow \mathcal{X}_{2}} = \sfa \times \Omega_{\mathcal{X}_{2} \setminus \mathcal{X}_{1}}$. 
Thus, if $\mathsf{a} \in 2^{\Omega_{X}}$, then $\mathsf{a}
^{\uparrow \{X, Y\}} = \mathsf{a} \times \Omega_{Y}$.

\begin{definition}[Dempster's rule using BPAs]
Suppose $m_{1}$ and $m_{2}$ are BPAs for $\mathcal{X}_{1}$ and  $\mathcal{X}_{2}$, respectively. 
Then $m_{1} \oplus m_{2}$ is a BPA for $\mathcal{X}_{1} \cup \mathcal{X}_{2} = \mathcal{X}$, say, 
given by $(m_{1} \oplus m_{2})(\emptyset)=0$ and
\begin{equation}
\label{Dempster1}
(m_{1} \oplus m_{2})(\sfa) = K^{-1} \sum_{\{\textsf{b}_{1}, \textsf{b}_{2} \subseteq \Omega_{\mathcal{X}} \mid 
\textsf{b}_{1} \cap \textsf{b}_{2} = \sfa\}} m_{1}(\textsf{b}_{1}^{\downarrow \mathcal{X}_{1}}) \, 
m_{2}(\textsf{b}_{2}^{\downarrow \mathcal{X}_{2}}),
\end{equation}
for all $\sfa\subseteq \Omega_{\mathcal{X}}$, where $K$ is a normalization constant given by
\begin{equation}
\label{normconstant}
K = 1 - \sum_{\{\textsf{b}_{1}, \textsf{b}_{2} \subseteq \Omega_{\mathcal{X}} \mid 
\textsf{b}_{1} \cap \textsf{b}_{2} = \emptyset\}} 
m_{1}(\textsf{b}_{1}^{\downarrow \mathcal{X}_{1}}) \, m_{2}(\textsf{b}_{2}
^{\downarrow \mathcal{X}_{2}}).
\end{equation}
\label{def:dempster1}
\end{definition}
The definition of Dempster's rule assumes that the normalization constant $K$ is non-zero. 
If $K = 0$, then the two BPAs $m_{1}$ and $m_{2}$ are said to be in \emph{total conflict} and cannot be combined. 
If $K = 1$, we say $m_{1}$ and $m_{2}$ are \emph{non-conflicting}.

\paragraph{Marginalization} Marginalization in D-S theory is addition of values of BPAs.
\begin{definition}[Marginalization]
Suppose $m$ is a BPA for $\mathcal{X}$. Then, the marginal of $m$ for $\mathcal{X}_{1}$, 
where $\mathcal{X}_{1} \subset \mathcal{X}$, denoted by $m^{\downarrow \mathcal{X}_{1}}$, is a BPA for $
\mathcal{X}_{1}$ such that for each $\sfa \subseteq \Omega_{\mathcal{X}_{1}}$,
\begin{equation}
m^{\downarrow \mathcal{X}_{1}}(\sfa) = \sum_{\{\textsf{b} \subseteq  \Omega_{\mathcal{X}} \mid \textsf{b}\proj{\mathcal{X}_{1}} = 
\,\sfa\}} m(\textsf{b}).
\label{eq:marg}
\end{equation}
\label{def:marg}
\end{definition}

\subsection{Conditional belief functions}
\label{subsec:conditional_bf}
In probability theory, it is common to construct joint PMFs for a set of discrete variables by using conditional 
probability distributions. For example, we can construct joint PMF for $(X, Y)$ 
by first assessing PMF $P_{X}$ of $X$, and conditional PMFs $P_{Y|x}$ for each $x \in \Omega_{X}$ such that $P_{X}(x) > 0$. 
The pointwise multiplication of $P_{Y|x}$ for all $x \in \Omega_{X}$ is called a 
CPT, and denoted by $P_{Y|X}$. Then, $P_{X, Y} = P_{X} \otimes P_{Y|X}$. We can construct joint BPA for $\{X, Y\}$ in a similar manner.

Suppose that there is a BPA for $Y$ expressing our belief about $Y$ 
if we know that $X=x$, and denote it by $m_{Y|x}$. Notice that $m_{Y|x}: 2^{\Omega_{Y}} \to [0, 1]$  is such that 
$\sum_{\textsf{b} \in 2^{\Omega_{Y}}} m_{Y|x}(\textsf{b}) = 1$. We can embed this conditional BPA for $Y$ into a  BPA for $\{X, Y\}$, 
which is denoted by $m_{x, Y}$, such that the following three conditions hold. First, $m_{x, Y}$ tells us nothing about $X$, i.e., 
$m_{x, Y}^{\downarrow X}(\Omega_{X}) = 1$. Second, $m_{x, Y}$ tells us nothing about $Y$, i.e., 
$m_{x, Y}^{\downarrow Y}(\Omega_{Y}) = 1$. Third, if we combine $m_{x, Y}$ with the deterministic BPA 
$m_{X=x}$ for $X$ such $m_{X=x}(\{x\}) = 1$ using Dempster's rule, and marginalize the result to $Y$ we obtain $m_{Y|x}$, i.e., 
$(m_{x, Y} \oplus m_{X=x} )^{\downarrow Y}  = m_{Y|x}$. The least committed way to obtain such an embedding,  called \emph{conditional embedding}, was derived by Smets \cite{Smets1978,Smets1993} 
(see also \cite{Shafer1982}). It consists of taking each focal set $\textsf{b} \in 2^{\Omega_{Y}}$ of 
$m_{Y|x}$, and converting it to a corresponding focal set of  $m_{x, Y}$ (with the same mass) as follows: 
$(\{x\} \times \textsf{b}) \cup (\overline{\{x\}} \times \Omega_{Y})$, where $\overline{\{x\}}$ denotes the complement of $\{x\}$ in $\Omega_X$. It is easy to confirm that this method of embedding 
satisfies the three conditions mentioned above, and  $m_{x, Y}$ is the least committed (minimally informative) BPA verifying this property.

\begin{example}[Conditional embedding]
\label{ex:condembed}
Consider discrete variables $X$ and $Y$, with $\Omega_{X} = \{x, \bar{x}\}$ and $\Omega_{Y} = \{y, \bar{y}\}$. 
Suppose that $m_{X}$ is a BPA for $X$ such that $m_{X}(x) > 0$. If we have a conditional BPA 
$m_{Y|x}$ for $Y$ given $X = x$ as follows: 
\begin{align}
m_{Y|x}({y}) &= 0.8, \mbox{ and }\nonumber\\
m_{Y|x}(\Omega_{Y}) &= 0.2, 
\end{align}
then its conditional embedding into BPA $m_{x, Y}$ for $\{X, Y\}$ is  
\begin{align}
m_{x, Y}(\{(x, y), (\bar{x}, y), (\bar{x}, \bar{y})\}) &= 0.8, \mbox{ and }\nonumber\\ 
m_{x, Y}(\Omega_{\{X, Y\}}) &= 0.2. 
\end{align}
\end{example}

%
%

There are some differences with conditional probability distributions. 
First, in probability theory, $P_{Y|X}$ consists of \emph{all} conditional distributions $P_{Y|x}$ that are well-defined, i.e., 
for all $x \in \Omega_{X}$ such that $P_{X}(x) > 0$. In D-S belief function theory, we do not have similar constraints. 
We can include only those non-vacuous conditionals $m_{Y|x}$ such that $m_{X}(\{x\}) > 0$. 
Also, if we have more than one conditional BPA for $Y$, given, say for $X = x_{1}$, and $X = x_{2}$ 
(assuming $m_{X}(\{x_1\}) > 0$, and $m_{X}(\{x_2\}) > 0$), we embed these two conditionals for $Y$ to get  BPAs 
$m_{x_1, Y}$ and $m_{x_2, Y}$ for $\{X, Y\}$, and then combine them using Dempster's rule of combination to obtain 
one conditional BPA $m_{Y|X} = m_{x_1, Y} \oplus m_{x_2, Y}$, which corresponds to $P_{Y|X}$ in probability theory.

Second, given any joint PMF $P_{X, Y}$ for $\{X, Y\}$, we can always factor this into 
$P_{X, Y}^{\downarrow X} = P_{X}$ for $X$, and $P_{Y|X}$ for $\{X, Y\}$, such that $P_{X, Y} = P_{X} \otimes P_{Y|X}$. 
This is not true in D-S belief function theory. Given a joint BPA $m_{X, Y}$ for $\{X, Y\}$, we cannot always find a BPA 
$m_{Y|X}$ for $\{X, Y\}$ such that $m_{X, Y} = m_{X, Y}^{\downarrow X} \oplus m_{Y|X}$. However, we can always \emph{construct} joint 
BPA $m_{X, Y}$ for $\{X, Y\}$ by first assessing $m_{X}$ for $X$, and assessing conditionals $m_{Y|x_i}$ for $Y$ 
for those $x_i$ that we have knowledge about and such that $m_{X}(\{x_i\}) > 0$, 
embed these conditionals into  BPAs for $\{X, Y\}$, and combine all such BPAs to obtain the  
BPA $m_{Y|X}$ for $\{X, Y\}$. An implicit assumption here is that BBAs $m_{x_i, Y}$ are distinct, and it is acceptable to combine them using 
Dempster's rule. We can then construct $m_{X, Y} = m_{X} \oplus m_{Y|X}$.

\subsection{Semantics of D-S belief function}
\label{subsec:semantics}

In D-S theory, belief functions are representations of an agent's state of knowledge based on some evidence. As explained by Shafer  \cite{Shafer1981}, such representations can be constructed by comparing the available evidence with  a hypothetical situation in which we receive a coded message, the meaning of which is random. More precisely, assume that a source sends us an encrypted message using a code selected at random from a set of codes $C=\{c_1,\ldots,c_n\}$ with known probabilities $p_1,\ldots,p_n$.  If we decode the message with code $c_i$, we get a decoded message of the form ``$X\in \Gamma(c_i)$'', where $\Gamma$ is a multi-valued mapping from $C$ to $2^{\Omega_X}$. For any nonempty subset $\sfa$ of $\Omega_X$, the probability that the meaning of the original message is ``$X\in \sfa$'' is
\[
m(\sfa)=\sum_{i=1}^n p_i I(\Gamma(c_i)=\sfa),
\]
where $I(\cdot)$ is the indicator function. The random message metaphor thus provides a way to construct BPAs $m$. The fundamental assumption of D-S theory is that such metaphors provide a scale of canonical examples to which any piece of evidence  (or, at least, most pieces of evidence encountered in practice) can be meaningfully compared. 

The random set metaphor accounts for the use of Dempster's rule, which can be easily derived from the assumption that the two BPAs $m_1$ and $m_2$ are induced by stochastically independent randomly coded messages. Two bodies of evidence are considered as independent if ``(they) are sufficiently unrelated that pooling them is like pooling stochastically independent randomly coded messages'' \cite[Section 5.1]{Shafer1981}.

In D-S theory, any belief function can thus be thought of as being induced by a multi-valued mapping from a probability space to the power set of the frame of discernment. Such multi-valued mappings were already explicitly constructed from a statistical model in Dempster's original application of belief functions to statistical inference \cite{Dempster1967}. A statistical model is no more ``real'' than a random code canonical examples: both are idealizations that allow us to formalize our knowledge and make inferences based on  reasonable assumptions. 

\section{A Utility Theory for D-S Belief Function Theory}
\label{sec:dsutility}
In this section, we describe a new utility theory for lotteries where the uncertainty is described by D-S belief functions. These lotteries, called \emph{belief function lotteries}\footnote{This notion was previously introduced in \cite{Denoeux2019} under the name ``evidential lottery.''}, will be introduced in Section \ref{subsec:bflotteries}. We present and discuss assumptions in Section \ref{subsec:assumptions} and state a representation theorem in Section \ref{subsec:theorems}. In Section \ref{subsec:add_assump}, we show that an additional assumption leads to a simpler model and we state the corresponding representation theorem. Finally, in Section \ref{subsec:simplermodel}, we describe an even simpler practical model allowing us to assess the utility of a belief function lottery based on a limited number of parameters.

\subsection{Belief function lotteries}
\label{subsec:bflotteries}
We generalize the decision framework outlined in Section \ref{sec:vNM} by assuming that uncertainty about the state of nature $X$ is described by a BPA $m_X$ for $X$. The probabilistic framework is recovered as a special case when $m_X$ is Bayesian. The BPA $m_X$ is assumed to be given, and is assumed to be a meaningful representation of the DM's state of knowledge about $X$ at a given time, with the semantics described in Section \ref{subsec:semantics}. As before, we define an act as a mapping $f$ from $\Omega_X$ to the set $\bO$ of outcomes. Mapping $f$ pushes  $m_X$ forward  from $\Omega_X$ to $\bO$, transferring each mass $m_X(\sfa)$ for $\sfa \in 2^{\Omega_X}$ to the image of subset $\sfa$ by $f$, denoted as $f[\sfa] = \{f(x): x \in \sfa\}$. The resulting BPA $m$ for $\bO$ is then defined as
\begin{equation}
\label{eq:bf_lottery}
m(\sfb)= \sum_{\{\sfa \in 2^{\Omega_X} \mid f[\sfa]= \sfb\}} m_X(\sfa),
\end{equation}
for all $\sfb\subseteq \bO$ \cite{Denoeuxetal2019}. Eq. \eqref{eq:bf_lottery} clearly generalizes Eq. \eqref{eq:prob_lottery}. The pair $[\bO, m]$ will be called a belief function (bf) lottery. It is a representation of the DM's subjective beliefs about the outcome that will  obtain as a result of selecting act $f$. As noted in \cite{Denoeux2019}, a bf lottery can also arise from a BPA $m_X$ on  $\Omega_X$ and a \emph{nondeterministic} act $f$, defined as mapping from $\Omega_X$ to $2^\bO$. This formalism may be useful to account for under-specified decision problems in which,  for instance, the set of acts or the state space $\Omega_X$ are too coarsely defined to allow for a precise description of the consequences of an act \cite{ghirardato01}.

As before, we assume that two acts can be compared from what we believe their outcomes will be, irrespective of the evidence on which we base our  beliefs. This assumption is a form of what Wakker \cite{Wakker2000} calls the \emph{principle of complete ignorance} (PCI). It implies that two acts resulting in the same bf lottery are equivalent. The problem of expressing preferences between acts becomes that of expressing preferences between bf lotteries.

\begin{remark}
As a consequence of the PCI,  preferences between acts do not depend on the cardinality of the state space $\Omega_X$ in case of complete ignorance. For instance, assume that we  define $\Omega_X=\{x_1,x_2\}$, and we are completely ignorant of the state of nature, so that our belief state is described by the vacuous BPA $m_X(\Omega_X)=1$. Consider two acts $f_1$ and $f_2$ that yield $\$100$ if, respectively, $x_1$ or $x_2$ occurs, and $\$0$ otherwise. These two acts induce the same vacuous bf lottery $m(\bO)=1$ with $\bO=\{\$100,\$0\}$: consequently, they are equivalent according to the PCI. Now, assume that we decide to express the states of nature with finer granularity and we refine state $x_1$ into two states $x_{11}$ and $x_{12}$. Let $\Omega_{X'}=\{x_{11},x_{12},x_2\}$ denote the refined frame. We still have $m_{X'}(\Omega_{X'})=1$ and $m(\bO)=1$, so that our preferences between acts $f_1$ and $f_2$ are unchanged. We note that a Bayesian DM applying Laplace's  principle of indifference (PI) would reach a different conclusion: before the refinement, the PI implies $p_X(x_1)=p_X(x_2)=1/2$, which results in the same probabilistic lottery $\bp=(1/2,1/2)$ on $\bO=\{\$100,\$0\}$ for the two acts, but after the refinement the same principle gives us $p_X(x_{11})=p_X(x_{12})=p_X(x_2)=1/3$; this results in two different lotteries  $\bp_1=(2/3,1/3)$ for act $f_1$ and $\bp_2=(1/3,2/3)$ for act $f_2$, which makes $f_1$ strictly preferable to $f_2$. Considering that the granularity of the state space  is often partly arbitrary $($as discussed by Shafer in \cite{Shafer1976}$)$, we regard this property of invariance to refinement under complete ignorance as a valuable feature of a decision theory based on D-S belief functions.
\end{remark}

We are thus concerned with a DM who has preferences on $\mathcal{L}_{bf}$, the set of all bf lotteries. 
Our task is to find a utility function $u: \mathcal{L}_{bf} \to [\mathbb{R}]$, where  $ [\mathbb{R}]$ denotes the set of closed real intervals, such that the $u(L) = [u, 1-v]$ is viewed as an interval-valued utility of $L$. The interval-valued utility can be interpreted as follows: $u$ and $v$ are, respectively, the degrees of belief of receiving the best and the worst outcome in a bf reference lottery equivalent to $L$ (and $1-v$ is, consequently, the degree of plausibility of receiving the best outcome). Given two lotteries $L$ and $L'$, $L$ is preferred to $L'$  if and only if $u \ge u'$ and $v\le v'$. This leads to incomplete preferences on the set of all bf lotteries. If we assume $u=1-v$ for all bf lotteries, then we have a real-valued utility function on $\mathcal{L}_{bf}$, and consequently, complete preferences.

\begin{example}[Ellsberg's Urn]
\label{ex:ellsbergurn}
Ellsberg \cite{Ellsberg1961} describes a decision problem that questions the adequacy of the vN-M axiomatic framework. Suppose we have an urn with 90 balls, of which 30 are red, and the remaining 60 are either black or yellow. We draw a ball at random from the urn. Let $X$ denote the color of the ball drawn, with $\Omega_X = \{r, b, y\}$. Notice that the uncertainty of $X$ can be described by a BPA $m_X$ for $X$ such that $m_X(\{r\}) = 1/3$, and $m_X(\{b, y\}) = 2/3$.

First, we are offered a choice between Lottery $L_1$: $\$100$ on red, and  Lottery $L_2$: $\$100$ on black, i.e., in $L_1$, you get $\$100$ if the ball drawn is red, and $\$0$ if the ball drawn is black or yellow, and in $L_2$, you get $\$100$ if the ball drawn is black and $\$0$ if the ball drawn is red or yellow. Choice of $L_1$ can be denoted by alternative $f_1: \Omega_X \to \{\$100, \$0\}$ such that $f_1(r) = \$100$, $f_1(b) = f_1(y) = \$0$. Similarly, choice of $L_2$ can be denoted by alternative $f_2: \Omega_X \to \{\$100, \$0\}$ such that $f_2(b) = \$100$, $f_2(r) = f_2(y) = \$0$. $L_1$ can be represented by the BPA $m_1$ for $\bf{O}$ = $\{\$0, \$100\}$ as follows: $m_1(\{\$100\})=1/3$, $m_1(\{\$0\})=2/3$. $L_2$ can be represented by BPA $m_2$ for $\bf{O}$ as follows: $m_2(\{\$0\}) = 1/3$, $m_2(\{\$0, \$100\}) = 2/3$. Notice that $L_1$ and $L_2$ are bf lotteries. Ellsberg notes that a frequent pattern of response is $L_1$  preferred to $L_2$.

Second, we are offered a choice between $L_3$: $\$100$ on red or yellow, and $L_4$: $\$100$ on black or yellow, i.e., in $L_3$ you get $\$100$ if the ball drawn is red or yellow, and $\$0$ if the ball drawn is black, and in $L_4$, you get $\$100$ if the ball drawn is black or yellow, and $\$0$ if the ball drawn is red. $L_3$ can be represented by BPA $m_3$ as follows: $m_3(\{\$100\}) = 1/3$, and $m_3(\{\$0,\$100\}) = 2/3$, and $L_4$ can be represented by the BPA $m_4$ as follows: $m_4(\{\$0\}) = 1/3$, $m_4(\{\$100\}) = 2/3$. $L_3$ and $L_4$ are also belief function lotteries. Ellsberg notes that  $L_4$ is often strictly preferred to $L_3$. Also, the same subjects who prefer $L_1$ to $L_2$, prefer $L_4$ to $L_3$. Table $\ref{tab:4bflotteries}$ is a summary of the four bf lotteries.

Thus, if the outcomes of a lottery are based on the states of a random variable $X$, which is described by a BPA $m_X$ for $X$, then we have a belief function lottery. In this example, we have only two outcomes, 
$\$100$, and $\$0$. $L_1$ and $L_4$ can also be regarded as probabilistic lotteries as the corresponding BPAs are Bayesian. $L_2$ and $L_3$ have BPAs with non-singleton focal sets. Thus, these two lotteries can be considered as involving ``ambiguity'' as the exact distribution of the probability (of $2/3$) between outcomes $\$100$ and $\$0$ is unknown. Regardless of how the probability of $2/3$ is distributed between $b$ and $y$, the preferences of subjects violate the tenets of vN-M utility theory. 
\end{example}

\begin{table}
\caption{Four belief function lotteries in Example \ref{ex:ellsbergurn}}
\begin{center}
\begin{tabular}{|l c c c |}
\hline
$Lottery$ &$m_i(\{\$100\})$ &$m_i(\{\$0\})$ &$m_i(\{\$100, \$0\})$\\
\hline
$L_1$ (\$$100$ on $r$)			&$1/3$ 	&$2/3$		&\\
$L_2$ (\$$100$ on $b$)		&		&$1/3$		&$2/3$\\
\hline
$L_3$ (\$$100$ on $r$ or $y$)	&$1/3$	&			&$2/3$\\
$L_4$ (\$$100$ on $b$ or $y$)	&$2/3$	&$1/3$		&\\
\hline
\end{tabular}
\end{center}
\label{tab:4bflotteries}
\end{table}

\subsection{Assumptions of our framework}
\label{subsec:assumptions}
As in the probabilistic case, we will assume that a DM's preferences for bf lotteries are reflexive and transitive. 
However, unlike the probabilistic case (Assumption \ref{a4p}), we do not assume that these preferences  are complete. 
In the probabilistic case, incomplete preferences are studied in \cite{Aumann1962}, and in the case of sets of utility functions, 
in \cite{Dubraetal2004}.

Our first assumption is identical to Assumption \ref{a1p}.

\begin{assumption}[Weak ordering of outcomes] 
\label{a1b}
The DM's preferences $\succsim$ for outcomes in $\bO = \{O_1, \ldots, O_r\}$ are complete and transitive.
\end{assumption}

This allows us to label the outcomes such that 
\begin{equation}
\label{eq:order}
O_1 \succsim O_2 \succsim \cdots \succsim O_r, \quad \text{and} \quad O_1 \succ O_r.
\end{equation}

Let $\mathcal{L}_{bf}$ denote the set of all bf lotteries on $\bf{O}$ = $\{O_1, \ldots O_r\}$, where the outcomes satisfy Eq. \eqref{eq:order}. 
As every BPA $m$ on $\bf{O}$ is a bf lottery, $\mathcal{L}_{bf}$ is 
essentially the set of all BPAs on $\bf{O}$. As the set of all BPAs include Bayesian BPAs, the set $\mathcal{L}_{bf}$ is a superset of $\mathcal{L}$, 
i.e., every probabilistic lottery on $\bf{O}$ can be considered a bf lottery.

Consider a compound lottery $[\textbf{L}, m]$, where $\textbf{L} = \{L_1, \ldots, L_s\}$, $m$ is a BPA for $\textbf{L}$, and
$L_j = [\bO, m_{j | L_j}]$  is a bf lottery on $\bO$, where $m_{j | L_j}$ is a conditional BPA for $\bO$ in the second stage given that lottery $L_j$ is realized in the first stage. We thus have $s+1$ pieces of evidence represented by BPAs $m$ and $m_{1 | L_1},\ldots,m_{s | L_s}$.  Assuming these pieces of evidence to be independent, they can be combined by Dempster's rule \eqref{Dempster1}, after conditionally embedding the conditional BPAs $m_{j | L_j}$ (see Section \ref{subsec:conditional_bf}). Marginalizing the orthogonal sum on $\bO$, we get a BPA $m'$.  Assumption \ref{a2b} below posits that the resulting simple lottery $[\bO, m']$ is equally preferred to the original compound lottery $[\textbf{L}, m]$, i.e.,  we can reduce the compound lottery to a simple bf lottery on $\bO$ using the D-S calculus.

\begin{assumption}[Reduction of compound lotteries]
\label{a2b}
Suppose $[\textbf{L}, m]$ is a compound lottery as described in the previous paragraph. Then, 
$[\textbf{L}, m] \sim [\bO, m^\prime]$, where 
\begin{equation}
\label{eq:reduction}
m^\prime = \left(m \oplus \left(\bigoplus_{j=1}^{s} m_{L_j, j}\right)\right)^{\downarrow \bO},
\end{equation}
and $m_{L_j, j}$ is a BPA for $(\textbf{L}, \bO)$ obtained from $m_{j | L_j}$ by conditional embedding, for $j = 1, \ldots, s$. 
\end{assumption}

\begin{figure}[htbp]
\begin{center}
\includegraphics[width=4in]{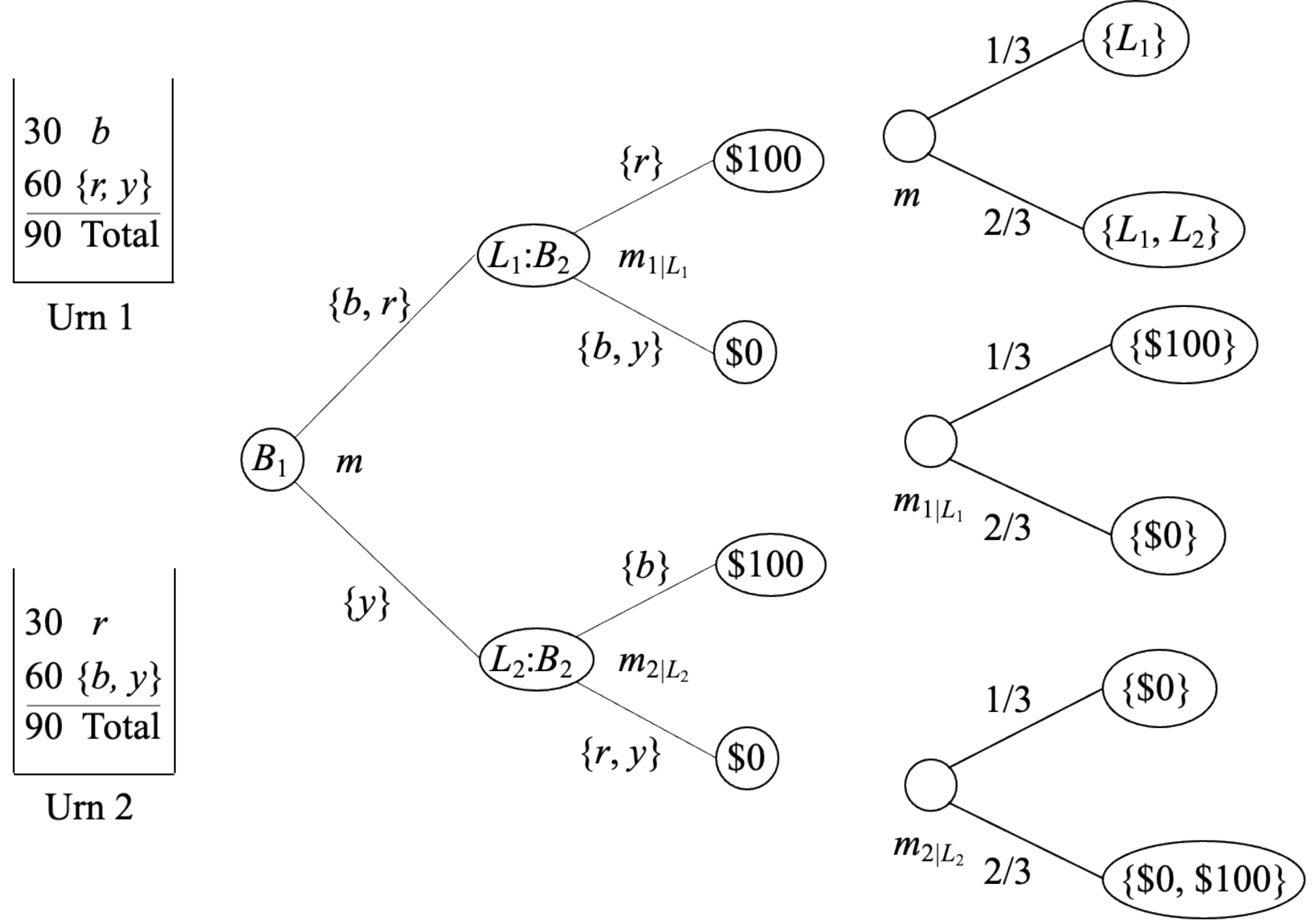}
\caption{A 2-stage compound belief function lottery. $B_1$ is drawn from Urn 1, and $B_2$ is drawn from Urn 2. The corresponding BPAs are shown on the right. In the first stage, $m$ is a BPA for $\{L_1, L_2\}$. In the second stage, $m_{1|L_1}$ and $m_{2|L_2}$ are BPAs for $\{\$100, \$0\}$.}
\label{fig:complotex3}
\end{center}
\end{figure}

\begin{example}
Consider two urns: Urn 1 contains 90 balls, 30 of which are black, and 60 are red or yellow. Urn 2 is identical to Ellsberg's urn in Example  \ref{ex:ellsbergurn}: it contains 90 balls, 30 of which are red, and 60 are black or yellow. Assume that, in the first stage, you are allowed to draw one ball $B_1$ from Urn 1:
\begin{itemize}
\item If $B_1$ is black or red, then you will be allowed to draw one ball  from Urn 2 in the second stage, and you will get \$100 if it is red, and \$0 otherwise (lottery $L_1$ of Example \ref{ex:ellsbergurn});
\item If $B_1$ is yellow, you will be allowed to draw one ball from Urn 2, and you will get \$100 if it is black, and \$0 otherwise (lottery $L_2$ of Example \ref{ex:ellsbergurn}).
\end{itemize}
Here, we have a compound bf lottery with outcome space $\bL=\{L_1,L_2\}$. We get $L_1$ or $L_2$ depending on the color $X_1$ of ball $B_1$ drawn from Urn 1. We have $m_{X_1}(\{b\})=1/3$ and   $m_{X_1}(\{r,y\})=2/3$, and the act $f$ defined by $f(b)=f(r)=L_1$, $f(y)=L_2$. The BPA on $\bL$ is, thus, $m(\{L_1\})=1/3$ and  $m(\{L_1,L_2\})=2/3$. Now, $L_1$ and $L_2$ are bf lotteries on $\bO=\{\$100,\$0\}$, with BPAs 
\begin{equation*}
m_{1|L_1}(\{\$100\})=1/3, \quad m_{1|L_1}(\{\$0\})=2/3
\end{equation*}
and
\begin{equation*}
m_{2|L_2}(\{\$0\})=1/3, \quad m_{2|L_2}(\{\$0,\$100\})=2/3.
\end{equation*}
Here, $m_{j|L_j}$ is a conditional BPA on $\bO$, given that $L_j$ is obtained in the first stage (see Figure \ref{fig:complotex3}).  The conditional embeddings of $m_{1|L_1}$ and $m_{2|L_2}$ are BPAs on $\bL\times\bO$ equal, respectively, to
\begin{align*}
m_{L_1,1}(\{(L_1,\$100),(L_2,\$100), (L_2,\$0)\})&=1/3, \\
m_{L_1,1}(\{(L_1,\$0),(L_2,\$100), (L_2,\$0)\})&=2/3,
\end{align*}
and
\begin{align*}
m_{L_2,2}(\{(L_2,\$0),(L_1,\$100), (L_1,\$0)\})&=1/3, \\
m_{L_2,2}(\bL\times\bO)&=2/3,
\end{align*}
Their orthogonal sum is
\begin{align*}
(m_{L_1,1}\oplus m_{L_2,2})(\{(L_1,\$100), (L_2,\$0)\})&=1/9, \\
(m_{L_1,1}\oplus m_{L_2,2})(\{(L_1,\$0), (L_2,\$0)\})&=2/9, \\
(m_{L_1,1}\oplus m_{L_2,2})(\{(L_1,\$100),(L_2,\$100), (L_2,\$0)\})&=2/9,\\
(m_{L_1,1}\oplus m_{L_2,2})(\{(L_1,\$0),(L_2,\$100), (L_2,\$0)\})&=4/9.
\end{align*}
Combining it with $m$, we get
\begin{align*}
(m\oplus m_{L_1,1}\oplus m_{L_2,2})(\{(L_1,\$100)\})&=3/27, \\
(m\oplus m_{L_1,1}\oplus m_{L_2,2})(\{(L_1,\$0)\})&=6/27, \\
(m\oplus m_{L_1,1}\oplus m_{L_2,2})(\{(L_1,\$100), (L_2,\$0)\})&=2/27, \\
(m\oplus m_{L_1,1}\oplus m_{L_2,2})(\{(L_1,\$0), (L_2,\$0)\})&=4/27, \\
(m\oplus m_{L_1,1}\oplus m_{L_2,2})(\{(L_1,\$100),(L_2,\$100), (L_2,\$0)\})&=4/27,\\
(m\oplus m_{L_1,1}\oplus m_{L_2,2})(\{(L_1,\$0),(L_2,\$100), (L_2,\$0)\})&=8/27.
\end{align*}
Marginalizing on $\bO$, we get $m'=(m\oplus m_{L_1,1}\oplus m_{L_2,2})^{\downarrow \bO}$ equal to
\begin{align*}
m'(\{\$100\})&=3/27=1/9,\\
m'(\{\$0\})&=10/27,\\
m'(\{\$100,\$0\})&=14/27.
\end{align*}
According to Assumption \ref{a2b}, a rational DM should be indifferent between receiving the compound bf lottery $[\bL,m]$, or receiving the bf lottery $[\bO,m']$, i.e., a prize about which the only information he has is given by a randomly coded message whose meaning can be ``The value of the prize is \$100'' with probability 1/9, ``The value of the prize is \$0'' with probability 10/27, and ``The value of the prize is unknown'' with probability 14/27.
\end{example}

The following proposition states that Assumption \ref{a2b} generalizes Assumption \ref{a2p}.

\begin{proposition}
Let $\bL=\{L_1,\ldots,L_s\}$ be a set of bf lotteries, with $L_j=[\bO,m_{j|L_j}]$, in which $m_{j|L_j}$ is a Bayesian conditional BPA for $\bO$ such that $m_{j|L_j}(\{O_i\})=p_i^{(j)}$ and $\sum_{i=1}^r p_i^{(j)}=1$ for $j=1,\ldots,s$. Let $[\textbf{L}, m]$ be a compound lottery in which  $m$ is a Bayesian BPA for $\bL$ such that $m(\{L_j\}=q_j$ for  $j=1,\ldots,s$ with $\sum_{j=1}^s q_j=1$. Then BPA $m'$ defined by \eqref{eq:reduction} is Bayesian and it verifies
\begin{equation}
\label{eq:reduction1}
m'(\{O_i\})= \sum_{j=1}^s q_j p_i^{(j)}
\end{equation}
for $i=1,\ldots,r$.
\end{proposition}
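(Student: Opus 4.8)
The plan is to unfold the three D-S operations appearing in \eqref{eq:reduction}---conditional embedding, Dempster combination, and marginalization---and to show that, for Bayesian inputs, all the set-theoretic bookkeeping collapses to the probabilistic convolution \eqref{eq:reduction1}. First I would write out the conditional embeddings explicitly. Since each $m_{j|L_j}$ is Bayesian with focal sets $\{O_i\}$ of mass $p_i^{(j)}$, the recipe from Section \ref{subsec:conditional_bf} gives $m_{L_j,j}$ the focal sets $E_{j,i}=(\{L_j\}\times\{O_i\})\cup(\overline{\{L_j\}}\times\bO)$, each carrying mass $p_i^{(j)}$, for $i=1,\ldots,r$.

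The key computation is the orthogonal sum $M=\bigoplus_{j=1}^s m_{L_j,j}$. A generic term in Dempster's rule selects one index $i_j$ for each $j$ and intersects the corresponding focal sets. I would verify the elementary fact that a point $(L_k,O_\ell)$ lies in $E_{j,i_j}$ precisely when $k=j$ forces $\ell=i_j$ (and is otherwise unconstrained), so that $\bigcap_{j=1}^s E_{j,i_j}=\{(L_k,O_{i_k}):k=1,\ldots,s\}$. This ``selection set'' is always nonempty, which shows the $m_{L_j,j}$ are non-conflicting ($K=1$) and that distinct index-tuples $(i_1,\ldots,i_s)$ yield distinct focal sets; hence $M$ assigns mass $\prod_{j=1}^s p_{i_j}^{(j)}$ to each such selection set and nothing else.

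Next I would combine $M$ with the vacuous extension of $m$ to $\bL\times\bO$, whose focal sets are $\{L_j\}\times\bO$ with mass $q_j$. Intersecting $\{L_j\}\times\bO$ with a selection set leaves the single point $(L_j,O_{i_j})$, so $m\oplus M$ is carried on singletons (again with $K=1$). Collecting all contributions to a fixed singleton $\{(L_j,O_i)\}$ amounts to summing $q_j\prod_{k}p_{i_k}^{(k)}$ over all tuples with $i_j=i$; factoring out $q_j\,p_i^{(j)}$ and using $\sum_{i_k} p_{i_k}^{(k)}=1$ for each $k\neq j$ collapses the remaining sum to $1$, giving $(m\oplus M)(\{(L_j,O_i)\})=q_j\,p_i^{(j)}$. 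Marginalizing to $\bO$ via \eqref{eq:marg} sends each singleton $\{(L_j,O_i)\}$ to $\{O_i\}$, so summing over $j$ yields exactly \eqref{eq:reduction1}; since only singletons survive the marginalization, $m'$ is Bayesian.

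The main obstacle is the set-intersection bookkeeping in the second step: one must check carefully that the conditionally embedded focal sets intersect to the selection set and never to $\emptyset$, so that no mass is lost to normalization and $K=1$ holds at both combination stages, and that the tuple-to-focal-set correspondence is injective. Once this combinatorial structure is pinned down, the final collapse to \eqref{eq:reduction1} is a routine application of $\sum_i p_i^{(j)}=1$, and the Bayesianity of $m'$ follows for free, since every focal set remains a singleton after the final marginalization.
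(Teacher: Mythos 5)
Your proof is correct and follows essentially the same route as the paper's: explicit conditional embedding, orthogonal sum of the embedded BPAs yielding the ``selection sets'' $\{(L_1,O_{i_1}),\ldots,(L_s,O_{i_s})\}$ with mass $\prod_j p_{i_j}^{(j)}$, combination with $m$ to get singletons of mass $q_j p_i^{(j)}$, and marginalization to $\bO$. You merely supply details the paper leaves implicit (non-conflict at both combination stages, injectivity of the tuple-to-focal-set map, and the collapse of the sum via $\sum_i p_i^{(j)}=1$), which is a faithful elaboration rather than a different argument.
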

\begin{proof}
The conditional embedding of $m_{j|L_j}$ is given by
\[
m_{L_j,j}\left(\{(L_j,O_i)\} \cup \overline{\{L_j\}}\times \bO\right)=p_i^{(j)}, \quad i=1,\ldots,r.
\]
Let $m_0=\bigoplus_{j=1}^{s} m_{L_j, j}$. It is a  BPA for $\bL\times\bO$ defined by
\[
m_0\left(\{(L_1,O_{i_1}),\ldots,(L_s,O_{i_s}) \} \right)=p^{(1)}_{i_1} \ldots p^{(s)}_{i_s}
\]
for all $(i_1,\ldots,i_s) \in \{1,\ldots,r\}^s$. Combining $m_0$ with $m$, we get a Bayesian BPA $m_0'$ on $\bL\times\bO$  such that
\[
m_0'(\{(L_j,O_i)\})=q_jp_i^{(j)}.
\]
After marginalizing on $\bO$, we finally get Eq. \eqref{eq:reduction1}.
\end{proof}

Next, we define a \emph{bf reference} lottery $[\bO_2, m]$ as a bf lottery on $\bO_2 = \{O_1, O_r\}$. A bf reference lottery has three 
parameters $u=m(\{O_1\})$, $v=m(\{O_r\})$, and $w=m(\bO_2)$, which  are all non-negative and sum to 1. It can be equivalently denoted as $[\bO_2, (u,v,w)]$. The first and second elements of the triple are, respectively,   the degrees of belief of receiving the best and worst outcomes, while the third element can be seen as a degree of ignorance. Obviously, the degrees of plausibility of receiving the best and the worst outcomes are, respectively, $1-v$ and $1-u$. The following assumption states that any deterministic bf lottery is equally preferred to some bf reference lottery.

\begin{assumption}[Continuity] 
\label{a3b}
Any subset of outcomes $\sfa\subseteq \bO$ (considered as a deterministic bf lottery) is indifferent to a bf reference lottery $\tsfa=[\bO_2, (u_{\sfa}, v_{\sfa}, w_{\sfa})]$ 
for some $u_{\sfa}, v_{\sfa}, w_{\sfa} \ge 0$ such that $u_{\sfa} + v_{\sfa} + w_{\sfa} = 1$. Furthermore, $w_\sfa=0$ if $\sfa=\{O_i\}$ is a singleton subset. 
\end{assumption}

For singleton subsets, the equivalent bf reference lottery is Bayesian: this ensures that Assumption \ref{a3b} is a generalization of Assumption \ref{a3p}. For non-singleton subsets $\sfa$ of outcomes, we may have $w_\sfa>0$, i.e., the bf reference lottery may not be Bayesian. In other words, we do not assume that ambiguity can be resolved by selecting an equivalent probabilistic reference lottery. 

\begin{example}
\label{ex:ex3}
Consider lottery $L_2 = [\{\$100, \$0\}, m_2]$ in Example \ref{ex:ellsbergurn}, where $m_2(\{\$0\}) = 1/3$, and $m_2(\{\$100, \$0\}) = 2/3$. 
Suppose we wish to assess the utility of focal set $\{\$100, \$0\}$ using a probabilistic reference lottery $[\{\$100, \$0\}, (p, 1-p)]$. A DM may have 
the following preferences. For any $p \le 0.2$ she prefers $\{\$100, \$0\}$ to the probabilistic reference lottery, and for any $p \ge 0.3$, 
she prefers the probabilistic reference lottery to $\{\$100, \$0\}$. However, she is unable to give us a 
precise $p$ such that $\{\$100, \$0\} \sim [\{\$100, \$0], (p, 1-p)]$. For such a DM, we can assess a bf reference lottery $[\{\$100, \$0\}, (0.2, 0.7, 0.1)]$ 
such that $Bel_{m_\sfa}(\{\$100\})=0.2$ and $Pl_{m_\sfa}(\{\$100\})=0.3$.
\end{example}

\begin{assumption}[Quasi-order]
\label{a4b}
The preference relation $\succsim$ for bf lotteries on $\mathcal{L}_{bf}$ is a quasi-order, i.e., it is reflexive and transitive. 
\end{assumption}

In contrast with the probabilistic case (Assumption \ref{a4p}), we do not assume that $\succsim$ is complete. There are many reasons we may not wish to assume completeness. It is not descriptive of human behavior. Even from a normative point of view, it is questionable that a DM has complete preferences on all possible lotteries. The assumption of incomplete 
preferences is consistent with the D-S theory of belief functions where we have non-singleton focal sets. Several authors, 
such as Aumann \cite{Aumann1962}, and Dubra \emph{et al.} \cite{Dubraetal2004} argue why the assumption of complete preferences may not be 
realistic in many circumstances. 

The substitutability assumption is similar to the probabilistic case (Assumption \ref{a5p})-- we replace an outcome in the probabilistic case by a focal set of $m$ in the 
bf case.
\begin{assumption}[Substitutability]
\label{a5b}
In any bf lottery $L = [\bO, m]$, if we substitute a focal set $\sfa$ of $m$ by an equally preferred bf reference lottery 
$\tsfa=[\bO_2, m_{\sfa}]$, then the result is a compound lottery that is equally preferred to $L$. 
\end{assumption}

It follows from Assumptions \ref{a1b}--\ref{a5b} that given any bf lottery, we can reduce it to an equally preferred bf reference lottery. This is stated as 
Theorem \ref{th:redbflotterygen} below.

\begin{theorem} [Reducing a bf lottery to an indifferent bf reference lottery]
\label{th:redbflotterygen}
Under Assumptions \ref{a1b}--\ref{a5b}, any  bf lottery $L = [\bO, m]$   with focal sets $\sfa_1,\ldots,\sfa_k$ is indifferent to a bf reference  lottery $\tL = [\bO_2, \tm]$, such that
\begin{subequations}
\label{eq:exputilgen}
\begin{align}
\tm(\{O_1\}) &= \sum_{i = 1}^k m(\sfa_i)\,u_{\sfa_i},\\
\tm(\{O_r\}) &= \sum_{i = 1}^k m(\sfa_i)\,v_{\sfa_i},\quad\text{and}\\
\tm(\bO_2) &= \sum_{i = 1}^k m(\sfa_i)\,w_{\sfa_i},
\end{align}
\end{subequations}
where $u_{\sfa_i}$, $v_{\sfa_i}$ and $w_{\sfa_i}$ are the masses assigned, respectively, to $\{O_1\}$, $\{O_r\}$ and $\bO_2$ by the bf reference  lottery $\tsfa_i$ equivalent to $\sfa_i$. 
\end{theorem}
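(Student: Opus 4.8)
The plan is to follow the structure of the proof of the probabilistic analogue, Theorem \ref{th:redbflottery}, substituting Continuity (Assumption \ref{a3b}), Substitutability (Assumption \ref{a5b}) and transitivity (Assumption \ref{a4b}) for their probabilistic counterparts, and then to evaluate the reduction \eqref{eq:reduction} of Assumption \ref{a2b} explicitly in the Dempster-Shafer calculus. The first move is to unpack $L=[\bO,m]$ as a two-stage compound lottery. I read its first stage as a \emph{Bayesian} BPA $\hat m$ over the labels $\tsfa_1,\ldots,\tsfa_k$ attached to the focal sets $\sfa_1,\ldots,\sfa_k$, with $\hat m(\{\tsfa_i\})=m(\sfa_i)$, and its second stage, conditional on $\sfa_i$, as the deterministic lottery $\sfa_i$ on $\bO$. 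That this reading is faithful is a one-line check: the embedding of each deterministic conditional has the single focal set $(\{\sfa_i\}\times\sfa_i)\cup(\overline{\{\sfa_i\}}\times\bO)$, their orthogonal sum has the single focal set $\bigcup_i(\{\sfa_i\}\times\sfa_i)$, and combining this with the Bayesian $\hat m$ and marginalising on $\bO$ via \eqref{eq:reduction} returns $m$. By Continuity each $\sfa_i$ is equally preferred to a reference lottery $\tsfa_i=[\bO_2,(u_{\sfa_i},v_{\sfa_i},w_{\sfa_i})]$, and Substitutability lets me replace the deterministic second stages by these reference lotteries; applying transitivity serially over $i=1,\ldots,k$ gives $L\sim[\bL,\hat m]$ with $\bL=\{\tsfa_1,\ldots,\tsfa_k\}$.

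It then remains to reduce $[\bL,\hat m]$ by Assumption \ref{a2b}: we obtain $\tL=[\bO_2,\tm]$ with $\tm=(\hat m\oplus\bigoplus_{i=1}^k m_{\tsfa_i,i})^{\downarrow\bO}$, and transitivity closes the chain $L\sim\tL$. The whole content of the theorem is therefore the claim that $\tm(\sfb)=\sum_{i=1}^k m(\sfa_i)\,m_{\sfa_i}(\sfb)$ for $\sfb\in\{\{O_1\},\{O_r\},\bO_2\}$, which, on reading off $\sfb=\{O_1\}$, $\{O_r\}$, $\bO_2$ and recalling that $u_{\sfa_i}$, $v_{\sfa_i}$, $w_{\sfa_i}$ are the masses $m_{\sfa_i}$ assigns to $\{O_1\}$, $\{O_r\}$, $\bO_2$, is exactly \eqref{eq:exputilgen}. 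This is the generalisation, from Bayesian conditionals to three-focal-set reference conditionals, of the computation carried out in the Proposition above, and it is the step I expect to require the most care.

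To establish it, I would compute the orthogonal sum $m_0=\bigoplus_{i=1}^k m_{\tsfa_i,i}$ of the conditionally embedded reference lotteries directly. Indexing the focal set of each embedded conditional by its choice $\sfb_i\in\{\{O_1\},\{O_r\},\bO_2\}$, namely $(\{\tsfa_i\}\times\sfb_i)\cup(\overline{\{\tsfa_i\}}\times\bO_2)$, one checks that the intersection over $i$ of the chosen focal sets is $\bigcup_{i=1}^k(\{\tsfa_i\}\times\sfb_i)$, because at label $\tsfa_i$ every conditional with index $j\neq i$ is vacuous and only the $i$-th restricts the outcome to $\sfb_i$; the associated mass is $\prod_{i=1}^k m_{\sfa_i}(\sfb_i)$, and no mass falls on $\emptyset$, so the normalisation constant is $K=1$. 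Combining with the Bayesian $\hat m$ intersects each such focal set with $\{\tsfa_i\}\times\bO_2$ and collapses it to the slice $\{\tsfa_i\}\times\sfb_i$; summing over the now-irrelevant coordinates $j\neq i$, each of which contributes $\sum_{\sfb_j}m_{\sfa_j}(\sfb_j)=1$, gives $(\hat m\oplus m_0)(\{\tsfa_i\}\times\sfb)=m(\sfa_i)\,m_{\sfa_i}(\sfb)$. Marginalising on $\bO$ lands automatically in $\bO_2$ and yields $\tm(\sfb)=\sum_{i=1}^k m(\sfa_i)\,m_{\sfa_i}(\sfb)$, as required. The hard part is precisely this bookkeeping: keeping the labels $\tsfa_i$ distinct even when two focal sets are assigned the same reference lottery, verifying that every relevant intersection is nonempty so that $K=1$, and confirming that the product masses factorise so that the coordinates $j\neq i$ sum out cleanly.
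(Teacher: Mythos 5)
Your proposal is correct and follows essentially the same route as the paper's own proof: replace each focal set by an indifferent bf reference lottery via Continuity and Substitutability, chain the indifferences with the quasi-order assumption, then apply Assumption \ref{a2b} and carry out the Dempster-Shafer computation explicitly (conditional embeddings, orthogonal sum with $K=1$, combination with the Bayesian first-stage BPA, marginalization), your indexing by choices $\sfb_i$ being just a notational variant of the paper's indexing by partitions $(I_1,I_2,I_3)$ of $\{1,\ldots,k\}$. The only additions beyond the paper are your preliminary check that $L$ itself is recovered as the reduction of the compound lottery with deterministic second stages, and your explicit flagging of the label-distinctness subtlety --- both harmless refinements of the same argument.
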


\begin{proof}
From Assumption \ref{a3b} (continuity),   we can replace each focal set $\sfa_i$ of $m$ one at a time by an indifferent bf reference lottery $\tsfa_i = [\bO_2, m_{i|\tsfa_i}]$, yielding  a sequence of compound lotteries.  From Assumptions \ref{a5b} (substitutability) and \ref{a4b} (quasi-order), these compound lotteries are all indifferent to $L$ (see  Figure \ref{fig:theorem2}). 
Let   $L' = [\textbf{L}, m^\prime]$ be the compound lottery obtained after all focal sets $\sfa_i$ have been substituted, with $m^\prime(\{\tsfa_i\}) = m(\sfa_i)$. From Assumption \ref{a2b} (reduction of compound lotteries), $L'$ can be reduced to a reference bf lottery $\tL=[\bO_2,\tm]$, by considering each BBA $m_{i|\tsfa_i}$ as a conditional BPA and applying the rules of D-S calculus. The reduced bf lottery  $\tL=[\bO_2, \tm]$ is then given by
\begin{equation*}
\tm = \left(m^\prime \oplus \left(\bigoplus_{i=1}^k m_{\tsfa, i}\right)\right)^{\downarrow \bO_2},
\end{equation*} 
where $m_{\tsfa, i}$ is the BPA for $\bO_2$ obtained from $m_{i|\tsfa_i}$ by conditional embedding. If we have
\\$m_{i|\tsfa_i}(\{O_1\}) = u_{\sfa_i}$, $m_{i|\tsfa_i}(\{O_r\}) = v_{\sfa_i}$
and $m_{i|\tsfa_i}(\bO_2) = w_{\sfa_i}$, then after conditional embedding BPA $m_{\tsfa_i, i}$ is as follows:
\begin{align*}
m_{\tsfa_i, i}(\{(\tsfa_i, O_1)\} \cup (\overline{\{\tsfa_i\}}\times \bO_2)) &= u_{\sfa_i}\\
m_{\tsfa_i, i}(\{(\tsfa_i, O_r)\} \cup (\overline{\{\tsfa_i\}}\times \bO_2)) &= v_{\sfa_i}\\
m_{\tsfa_i, i}(\textbf{L}\times\bO_{2}) &= w_{\sfa_i}.
\end{align*}
Let $m_0$ denote $\bigoplus_{i=1}^k m_{\tsfa_i, i}$. The focal sets of $m_0$ are of the form
\begin{equation*}
\sfa_{(I_1, I_2, I_3)} = \left(\bigcup_{i \in I_1} \tsfa_i \times \{O_1\}\right) \cup \left(\bigcup_{i \in I_2} \tsfa_i \times \{O_r\}\right) \cup 
\left(\bigcup_{i \in I_3} L_i 
\times \bO_2\right),
\end{equation*}
for all partitions $(I_1, I_2, I_3)$ of $\{1, \dots, k\}$, and the corresponding value of $m_0$ is:
\begin{equation*}
m_0(\sfa_{(I_1, I_2, I_3)}) = \left(\prod_{i\in I_1} u_{\sfa_i}\right)\left(\prod_{i\in I_2} v_{\sfa_i}\right)\left(\prod_{i\in I_3} 
w_{\sfa_i}\right).
\end{equation*}
Next, we combine Bayesian BPA $m^\prime$ for $\textbf{L}$ with $m_0$. The focal sets of $m^\prime \oplus m_0$ are of the 
form $\{\tsfa_i\} \times \{O_1\}$,  
$\{\tsfa_i\} \times \{O_r\}$, and $\{\tsfa_i\} \times \bO_2$ depending on whether $i \in I_1$ or $i \in I_2$, or $i \in I_3$, respectively, with mass 
$m(\sfa_i) m_0(\sfa_{(I_1, I_2, I_3)})$. Finally we marginalize $m^\prime \oplus m_0$ to $\bO_2$. The  mass assigned to each focal set $\{\tsfa_i\} \times \{O_1\}$ is $m_{\sfa_i} u_{\sfa_i}$. Thus, $\tm(\{O_1\}) = 
\sum_{i=i}^k m(\sfa_i) u_{\sfa_i}$. Similarly, 
$\tm(\{O_r\}) = \sum_{i=i}^k m(\sfa_i) v_{\sfa_i}$, and 
$\tm(\bO_2) = \sum_{i=i}^k m(\sfa_i) w_{\sfa_i}$.
\end{proof}

\begin{figure}
\centering
\includegraphics[width=\textwidth]{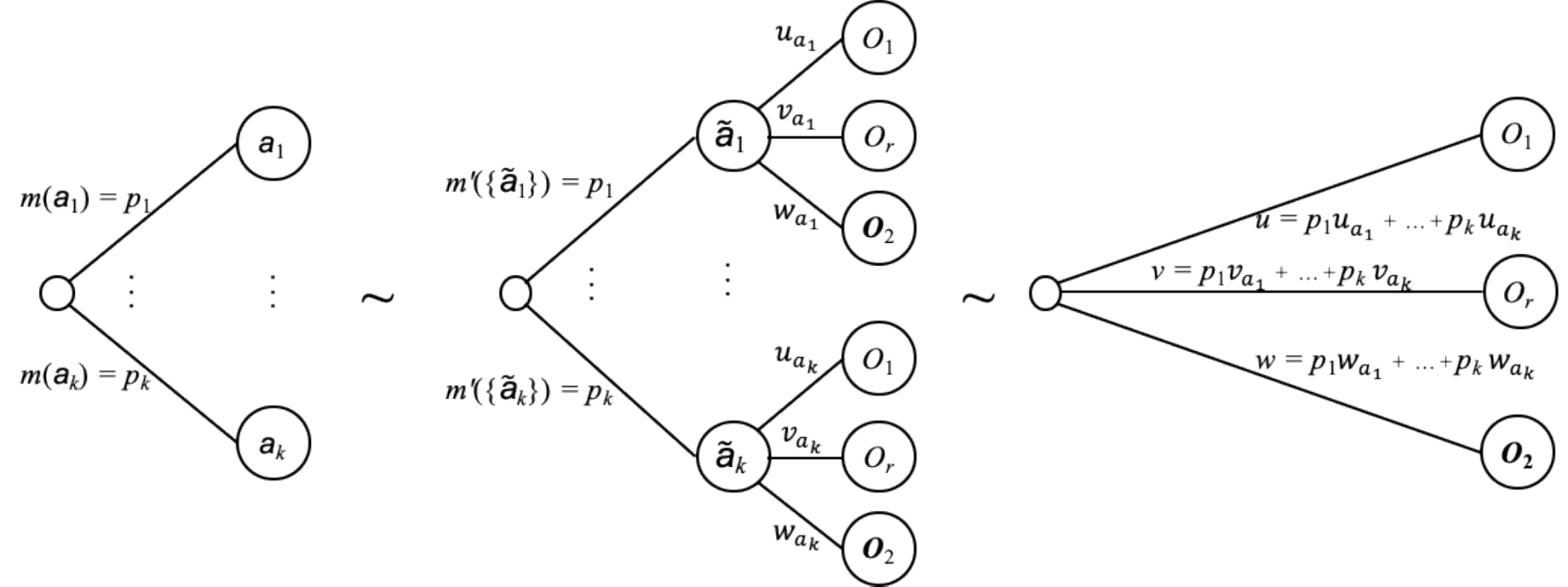}
\caption{Reducing a bf lottery to a bf reference lottery}
\label{fig:theorem2}
\end{figure}


Next, we formulate a monotonicity assumption to generalize Assumption \ref{a6p}.  Given two bf reference lotteries, if any of them assigns a higher degree of belief to the best outcome $O_1$ and a lower degree of belief to the worst outcome $O_r$ (or, equivalently,  higher degrees of belief and plausibility to  $O_1$), then it should arguably be preferred. If this is not the case, i.e., if the best and the worst outcomes both have a higher degree of belief for one lottery (or, equivalently, if the belief-plausibility intervals for $O_1$ in the two lotteries are strictly nested), then there does not seem to be any solid ground for preference, and the two lotteries can be considered as incomparable. This line of reasoning is formalized in Assumption \ref{a6b} below.

\begin{assumption}[Monotonicity]
\label{a6b}
Suppose $L = [\bO_2, (u,v,w)]$ and $L^\prime = [\bO_2, (u',v',w')]$ are bf reference lotteries. Then,
$L \succsim L^\prime$ if and only if $u \ge u' $ and $1-v \ge 1-v'$. 
\end{assumption}

It is clear that $\succsim$ as defined in Assumption \ref{a6b} is reflexive and transitive.  The  corresponding indifference relation is $L\sim L'$ if and only if $u=u'$ and $v=v'$, i.e., if and only if $L=L'$. Also,  the preference relation $\succsim$ on the set of all bf reference lotteries is obviously incomplete: two lotteries $L$ and $L'$ are \emph{incomparable} if not $L \succsim L^\prime$ and not $L^\prime \succsim L$, i.e., if the intervals $[u,1-v]$ and $[u',1-v']$ are strictly nested.


The preference relation defined in Assumption \ref{a6b} can equivalently be expressed as  $L \succsim L^\prime$ if and only if 
$Bel_m(\{O_1\}) \ge Bel_{m'}(\{O_1\})$ and $Pl_m(\{O_1\}) \ge Pl_{m'}(\{O_1\})$ (meaning that  the best outcome $O_1$ is deemed both more credible and more plausible under $L$ than it is if under $L'$). 

We note that a stronger notion of preference would be to prefer $L$ over $L'$ if and only if  $Bel_m(\{O_1\}) \ge Pl_{m'}(\{O_1\})$, i.e., if and only if $u \ge 1-v'$.
This alternative preference relation is arguably too strict, which would lead to a more incomplete preference order on lotteries.

To conclude this section, we note that Assumptions \ref{a1b}, \ref{a3b} and \ref{a6b} imply  the following consistency constraints between the reference bf lotteries equivalent to  single outcomes:
\begin{equation}
\label{eq:consistency1}
1 = u_{\{O_1\}} \ge u_{\{O_2\}} \ge \ldots \ge u_{\{O_r\}} = 0.
\end{equation}


\subsection{Representation theorem}
\label{subsec:theorems}

\begin{theorem}[Interval-valued utility function]
\label{th:dsutility1}
Suppose $L = [\bO, m]$ and $L^\prime = [\bO, m^\prime]$ are bf lotteries on $\bO$. 
If the preference relation $\succsim$ on $\mathcal{L}_{bf}$ satisfies Assumptions \ref{a1b}--\ref{a6b}, then there are intervals $[u_{\sfa}, 1-v_{\sfa}]$ 
associated with nonempty subsets $\sfa\subseteq \bO$ such that  $L \succsim L'$ if and only if
\begin{subequations}
\begin{equation}
\label{eq:lowerutil}
\sum_{\emptyset \neq \sfa\subseteq \bO} m(\sfa)\,u_{\sfa} \ge \sum_{\emptyset \neq \sfa\subseteq \bO} m^\prime(\sfa)\,u_{\sfa}
\end{equation}
and
\begin{equation}
\label{eq:upperutil}
\sum_{\emptyset \neq \sfa\subseteq \bO} m(\sfa)\,v_{\sfa} \le \sum_{\emptyset \neq \sfa\subseteq \bO} m^\prime(\sfa)\,v_{\sfa}.\end{equation}
\end{subequations}
Thus, for a bf lottery $L = [\bO, m]$, we can define 
\begin{equation}
\label{eq:dsutility1}
[u](L) = [u, 1-v]
\end{equation}
as an interval-valued utility of $L$, with
\begin{equation}
\label{eq:uw}
u=\sum_{\emptyset \neq \sfa\subseteq \bO} m(\sfa)\,u_{\sfa_i} \quad \text{and} \quad  v=\sum_{\emptyset \neq \sfa\subseteq \bO} m(\sfa)\,v_{\sfa}.
\end{equation}

 Also, such a utility function is unique up to a strictly increasing affine transformation, 
i.e., if $u' = a\,u + b$, and $v' = a\,v + b$, 
where $a >0$, and $b$ are real constants, then 
\begin{equation*}
[u'](L) = [u', 1-v']
\end{equation*}
also qualifies as an interval-valued utility function. 
\end{theorem}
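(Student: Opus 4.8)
The plan is to leverage Theorem~\ref{th:redbflotterygen}, which already reduces any bf lottery $L=[\bO,m]$ to an indifferent bf reference lottery $\tL=[\bO_2,\tm]$ whose masses are the convex combinations given in \eqref{eq:exputilgen}. Reading off those formulas, $\tm(\{O_1\})=\sum_{\sfa}m(\sfa)\,u_\sfa=u$ and $\tm(\{O_r\})=\sum_{\sfa}m(\sfa)\,v_\sfa=v$, exactly the quantities appearing in \eqref{eq:uw}. So the interval $[u,1-v]$ is precisely the belief-plausibility interval of receiving the best outcome $O_1$ in the equivalent reference lottery. The first step, therefore, is simply to invoke Theorem~\ref{th:redbflotterygen} to obtain $L\sim\tL=[\bO_2,(u,v,w)]$ and $L'\sim\tL'=[\bO_2,(u',v',w')]$ with $u,v$ as in \eqref{eq:uw} and $u',v'$ the analogous sums against $m'$.

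The second step is to transfer the preference comparison between $L$ and $L'$ to their reference-lottery surrogates. By Assumption~\ref{a4b} (transitivity and reflexivity of $\succsim$) together with the two indifferences $L\sim\tL$ and $L'\sim\tL'$, I would argue that $L\succsim L'$ holds if and only if $\tL\succsim\tL'$. This is the routine but essential ``substitution of indifferent lotteries'' argument: from $\tL\succsim\tL'$, $L\sim\tL$, and $\tL'\sim L'$ one chains transitivity to get $L\succsim L'$, and symmetrically for the converse. Then Assumption~\ref{a6b} (monotonicity) characterizes $\tL\succsim\tL'$ as exactly the conjunction $u\ge u'$ and $1-v\ge 1-v'$, i.e.\ $u\ge u'$ and $v\le v'$. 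Substituting the explicit sums \eqref{eq:uw} for $u,v,u',v'$ yields precisely the two inequalities \eqref{eq:lowerutil} and \eqref{eq:upperutil}, establishing the representation. Defining $[u](L)=[u,1-v]$ as in \eqref{eq:dsutility1} then packages this as an interval-valued utility whose order agrees with $\succsim$.

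For the uniqueness-up-to-affine-transformation claim, the plan is to observe that the construction rests entirely on the singleton-outcome utilities through Assumption~\ref{a3b}, and that the normalization $u_{\{O_1\}}=1$, $u_{\{O_r\}}=0$ (and likewise for $v$) was a choice of scale. I would verify that applying $t\mapsto at+b$ with $a>0$ to both $u$ and $v$ is order-preserving: since $u\ge u'\iff au+b\ge au'+b$ and $v\le v'\iff av+b\le av'+b$ for $a>0$, the induced preference relation is unchanged, so $[u'](L)=[u',1-v']$ is an equally valid representation. The affinity of the $u_\sfa$ and $v_\sfa$ transformations propagates linearly through the convex-combination formulas \eqref{eq:uw}, so the transformed interval endpoints are the same affine images of the original ones; this is a direct computation requiring no new idea.

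The main obstacle I anticipate is the second step rather than the first: carefully justifying that incompleteness of $\succsim$ does not break the transitivity chain, and that the ``if and only if'' in the representation survives the passage through indifferent surrogates. Because $\succsim$ is only a quasi-order (Assumption~\ref{a4b}), one must ensure that $L$ and $L'$ being incomparable corresponds exactly to $\tL$ and $\tL'$ being incomparable (strictly nested intervals, per the discussion following Assumption~\ref{a6b}), and that no spurious comparability is introduced or lost when substituting. Concretely, the delicate point is that both directions of the biconditional $L\succsim L'\iff\tL\succsim\tL'$ genuinely require transitivity applied to indifferences on \emph{both} sides; I would state this explicitly to make clear that completeness is nowhere used, which is what distinguishes this theorem from its probabilistic analogue.
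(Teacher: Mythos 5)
Your proof is correct and takes essentially the same approach as the paper, whose entire proof reads ``immediate from Theorem \ref{th:redbflotterygen} and Assumption \ref{a6b} (monotonicity)''; your proposal simply makes explicit the steps this compresses, namely the transitivity chain through the indifferent reference lotteries (valid under the quasi-order Assumption \ref{a4b} without any appeal to completeness) and the order-preservation of affine rescaling for the uniqueness claim.
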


\begin{proof}
The proof is immediate from Theorem \ref{th:redbflotterygen} and Assumption \ref{a6b} (monotonicity).
\end{proof}

A special case of Theorem \ref{th:dsutility1} is if we use Bayesian bf reference lotteries for the continuity assumption, i.e., $w_{\sfa} = 0$ for all focal sets $\sfa$ of $m$. In this case, Theorem \ref{th:dsutility1} implies Corollary \ref{th:dsutility2} below where we have a real-valued utility function and a complete ordering on $\mathcal{L}_{bf}$.

\begin{corollary}[Real-valued utility function]
\label{th:dsutility2}
Suppose $L = [\bO, m]$ and $L^\prime = [\bO, m^\prime]$ are bf lotteries on $\bO$. If the preference relation $\succsim$ on $\mathcal{L}_{bf}$ satisfies Assumptions \ref{a1b}--\ref{a6b} and if $w_{\sfa} = 0$ for all focal sets $\sfa$ of $m$ and $m^\prime$, then there are numbers $u_{\sfa}$  associated with nonempty subsets $\sfa \subseteq \bO$ such that $L_1 \succsim L_2$ if and only if
\begin{equation*}
\sum_{\emptyset \neq \sfa\subseteq \bO} m(\sfa)\,u_{\sfa} \ge \sum_{\emptyset \neq \sfa\subseteq \bO} m^\prime(\sfa)\,u_{\sfa}.
\end{equation*}
Thus, for a bf lottery $L = [\bO, m]$, we can define 
\begin{equation}
\label{eq:dsutility}
u(L) = \sum_{\emptyset \neq \sfa\subseteq \bO} m(\sfa)\,u_{\sfa}
\end{equation}
as the utility of $L$. Also, such a utility function is unique up to a strictly increasing affine transformation, i.e., if $u_{\sfa}' = a\,u_{\sfa} + b$, where $a >0$, and $b$ are real constants, then 
\begin{equation*}
u'(L) = \sum_{\emptyset \neq \sfa\subseteq \bO} m(\sfa)\,u_{\sfa}'
\end{equation*}
also qualifies as a utility function.
\end{corollary}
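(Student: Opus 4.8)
The plan is to obtain this corollary as a direct specialization of Theorem \ref{th:dsutility1}, exploiting the fact that the hypothesis $w_{\sfa}=0$ forces every equivalent bf reference lottery to be Bayesian, which collapses the utility interval of each lottery to a single point.

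First I would recall that any bf reference lottery satisfies $u_{\sfa}+v_{\sfa}+w_{\sfa}=1$. Under the hypothesis $w_{\sfa}=0$ for all focal sets, this yields $v_{\sfa}=1-u_{\sfa}$ for every such $\sfa$. Substituting this into the expression for $v$ in Eq.~\eqref{eq:uw} and using that the masses of a BPA sum to one gives $v=\sum_{\emptyset\neq\sfa\subseteq\bO}m(\sfa)(1-u_{\sfa})=1-u$. Hence $1-v=u$, so the interval-valued utility $[u,1-v]$ from Theorem \ref{th:dsutility1} degenerates to the single value $u=\sum_{\emptyset\neq\sfa\subseteq\bO}m(\sfa)\,u_{\sfa}$, which is precisely the real-valued utility $u(L)$ claimed in Eq.~\eqref{eq:dsutility}.

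Next I would show that the two preference inequalities \eqref{eq:lowerutil} and \eqref{eq:upperutil} of Theorem \ref{th:dsutility1} reduce to a single inequality. Substituting $v_{\sfa}=1-u_{\sfa}$ into \eqref{eq:upperutil}, and again using that $m$ and $m'$ are BPAs, the upper-utility inequality $\sum_{\emptyset\neq\sfa\subseteq\bO} m(\sfa)\,v_{\sfa}\le\sum_{\emptyset\neq\sfa\subseteq\bO} m'(\sfa)\,v_{\sfa}$ becomes $1-\sum_{\emptyset\neq\sfa\subseteq\bO} m(\sfa)\,u_{\sfa}\le 1-\sum_{\emptyset\neq\sfa\subseteq\bO} m'(\sfa)\,u_{\sfa}$, which is equivalent to \eqref{eq:lowerutil}. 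Thus $L\succsim L'$ holds if and only if the single inequality of the corollary holds, and since this merely compares two real numbers, the induced preference order is complete---in contrast to the partial order of the general case.

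There is essentially no hard step here: the argument is a routine specialization, and the only care required is the algebraic verification that the upper-utility inequality collapses onto the lower-utility one once $v_{\sfa}=1-u_{\sfa}$ is imposed. Finally, uniqueness of $u(L)$ up to a strictly increasing affine transformation $u_{\sfa}'=a\,u_{\sfa}+b$ with $a>0$ follows immediately from the corresponding uniqueness statement in Theorem \ref{th:dsutility1}, since such a transformation of the $u_{\sfa}$ preserves the defining inequality.
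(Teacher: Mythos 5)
Your proposal is correct and follows exactly the paper's route: the paper proves this corollary by stating that it ``follows trivially from Theorem \ref{th:dsutility1},'' and your argument simply makes that specialization explicit (showing $w_{\sfa}=0$ forces $v_{\sfa}=1-u_{\sfa}$, hence $1-v=u$, so the interval collapses and the two inequalities \eqref{eq:lowerutil}--\eqref{eq:upperutil} coincide). The algebra you supply is exactly the routine verification the paper leaves to the reader, so there is nothing to add or correct.
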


\begin{proof}
The result in Corollary \ref{th:dsutility2} follows trivially from Theorem \ref{th:dsutility1}.
\end{proof}

The utility function in Eq.  \eqref{eq:dsutility} has exactly the same form as Jaffray's linear utility \cite{Jaffray1989}. This is discussed further in Section \ref{subsec:Jaffray}. 

Next, we  illustrate the application of Theorem \ref{th:dsutility1} to some examples: Ellsberg's urn problem described in Example \ref{ex:ellsbergurn},  the one red ball problem described in \cite{JirousekShenoy2017}, and the 1,000 balls urns described in \cite{BeckerBrownson1964}. 

\begin{figure}[htbp]
\begin{center}
\includegraphics[width=6in]{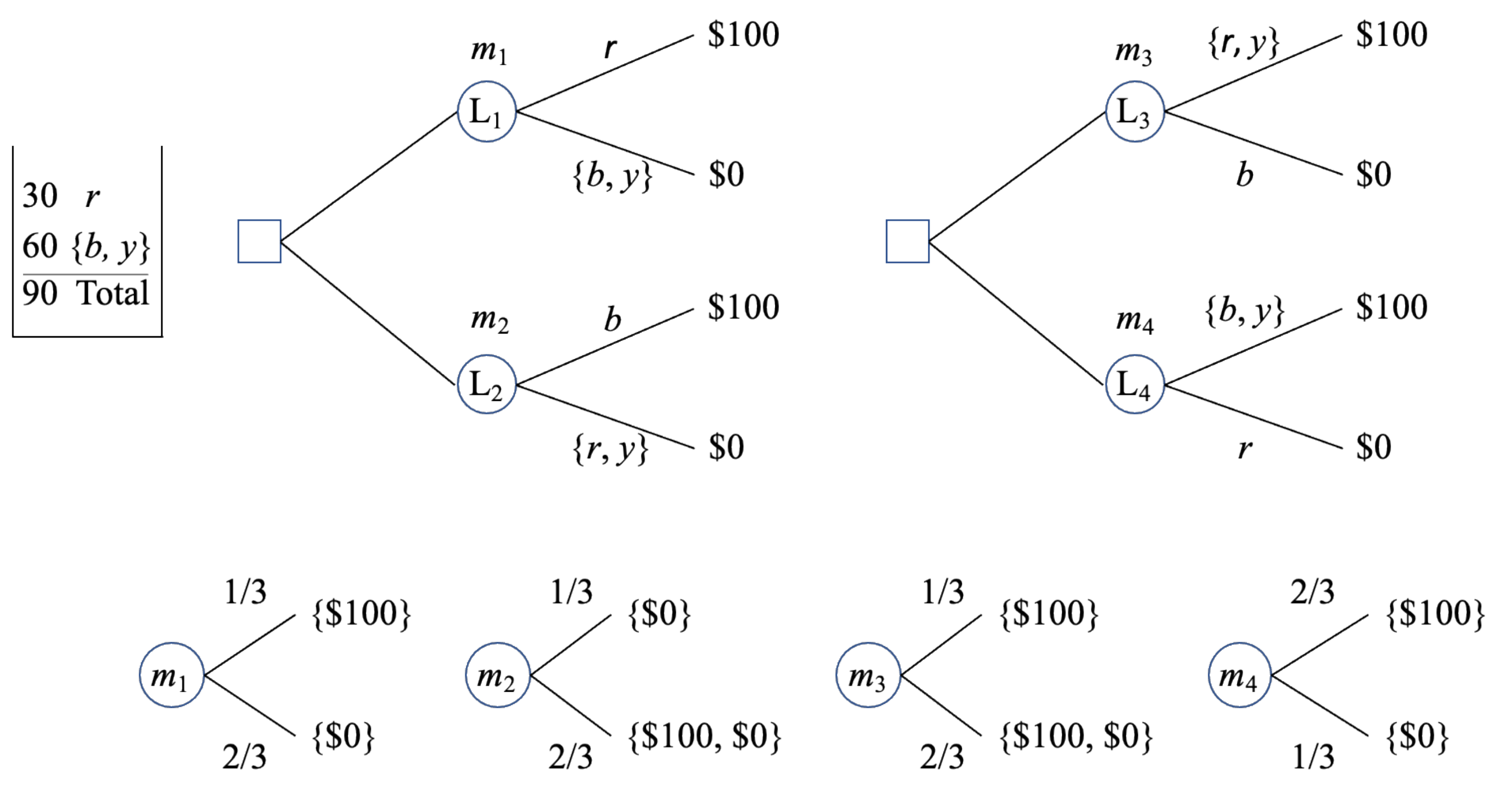}
\caption{Ellsberg's urn, choice of lotteries, and the corresponding belief function lotteries}
\label{fig:ellsbergurn}
\end{center}
\end{figure}

\begin{example}[Ellsberg's urn]
\label{ex:ellsbergurn3}
Consider the four bf lotteries described in Example \ref{ex:ellsbergurn} (also in Table \ref{tab:4bflotteries} and in Figure \ref{fig:ellsbergurn}). Given a vacuous bf lottery $[\{\$100, \$0\}, \iota]$, where $\iota$ is the vacuous BPA on $\bO=\{\$100, \$0\}$, what is an indifferent bf reference lottery? For an ambiguity-averse DM,  
\[
[\{\$100, \$0\}, (1/2, 1/2, 0)] \succ [\{\$100, \$0\}, \iota].
\]
For such a DM, we thus have   $1-v_{\{\$100,\,\$0\}}  < 1/2$.

For the first choice problem between $L_1$ $(\$100$ on $r)$ and $L_2$ $(\$100$ on $b)$, using Eq. \eqref{eq:dsutility}, 
$[u](L_1) = [1/3,1/3]$, and 
\[
[u](L_2) = \frac23\left[u_{\{\$100,\,\$0\}}, 1-v_{\{\$100,\,\$0\}} \right]. 
\]
Thus, an ambiguity-averse DM would choose $L_1$. This result is valid as long as $1-v_{\{\$100,\,\$0\}}  < 1/2$ and is consistent with Ellsberg's findings. 
For the second choice problem between $L_3$ $(\$100$ on $r$ or $y)$, and $L_4$ $(\$100$ on $b$ or $y)$, 
\[
[u](L_3) = \frac13(1) + \frac23\left[u_{\{\$100,\,\$0\}}, 1-v_{\{\$100,\,\$0\}} \right],
\] 
and $[u](L_4) = [2/3,2/3]$. An ambiguity-averse DM would choose $L_4$, as
\[
\frac13 +  \frac23 (1-v_{\{\$100,\,\$0\}}) < \frac23
\] 
as long as $1-v_{\{\$100,\,\$0\}}  < 1/2$, a result that is also consistent with Ellsberg's empirical findings.
\end{example}

\begin{figure}[htbp]
\begin{center}
\includegraphics[width=4in]{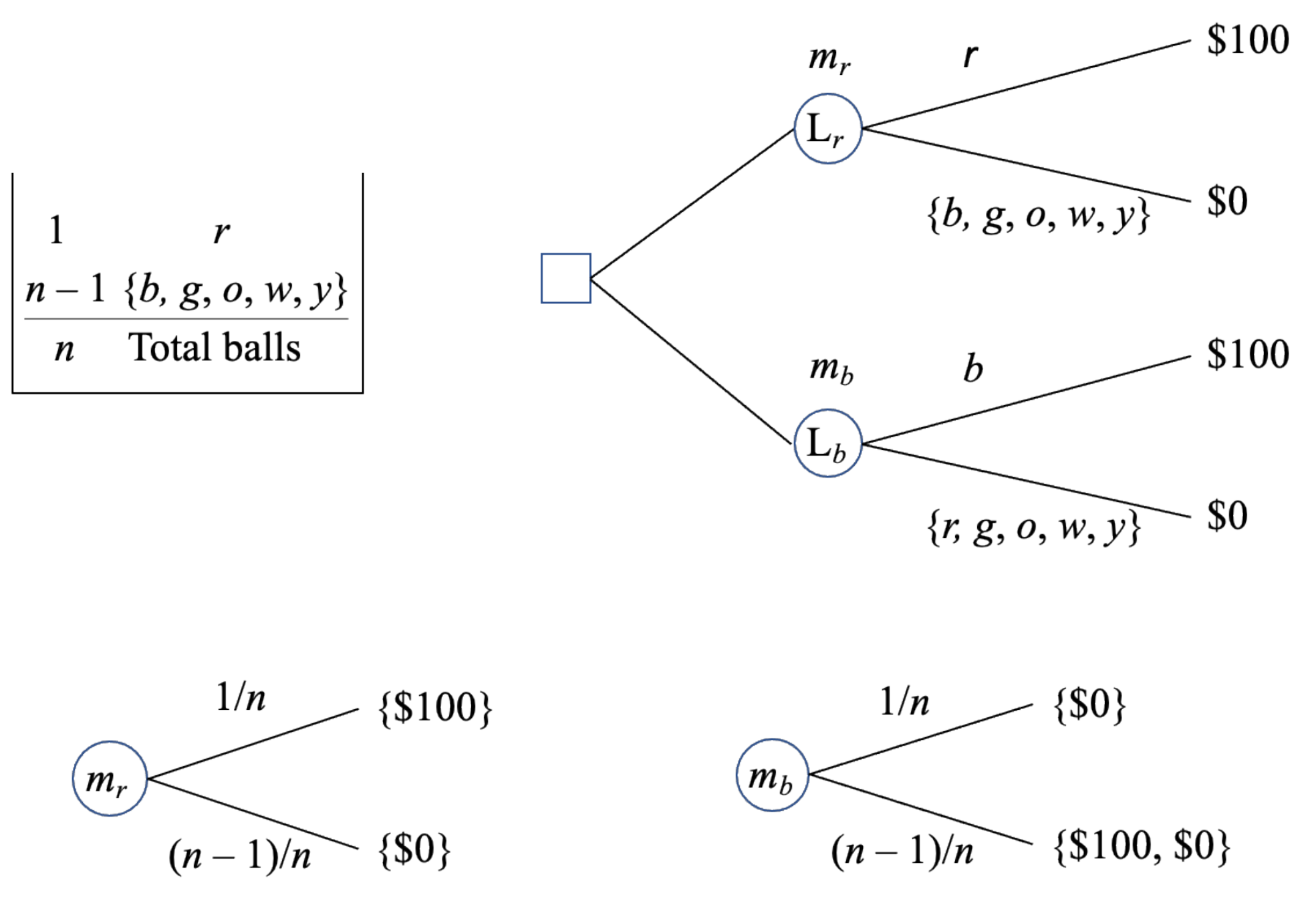}
\caption{One red ball urn, choices, and the corresponding belief function lotteries}
\label{fig:oneredball}
\end{center}
\end{figure}

\begin{example}[One red ball]
\label{ex:oneredball}
Consider the following example called ``one red ball'' in \cite{JirousekShenoy2017} (see Figure \ref{fig:oneredball}). An urn possibly contains balls of six colors: 
red $(r)$, blue $(b)$, green $(g)$, orange $(o)$, white $(w)$, and yellow $(y)$. One ball is drawn at random from the urn. 
We are informed that the urn has a total of $n$ balls, where $n$ is a positive integer, and that there is exactly one red ball in the urn. 
Suppose random variable $X$ denotes the color of the ball drawn from the urn. Then $\Omega_X = \{r, b, g, o, w, y\}$, and $m_X$ is a BPA for $X$ 
such that $m_X(\{r\}) = 1/n$, and $m_X(\{b, g, o, w, y\}) = (n-1)/n$. First, you choose a color, and then you draw a ball at random from the urn. You win $\$100$ 
if the color of the ball drawn from the urn matches the color you chose, and you win $\$0$ if it doesn't. What color do you choose? 
In \cite{JirousekShenoy2017}, the authors  describe some informal experiments where all respondents chose red for $n \le 7$, and for $n \ge 8$, several 
respondents preferred a color different from red.

Suppose you  choose $r$. The bf lottery $L_r$ based on $m_X$ is as follows: $[\{\$100, \$0\}, m_r]$, where $m_r(\{\$100\}) = 1/n$, 
and $m_r(\{\$0\}) = (n-1)/n$. If the color you pick is $b$, then the bf lottery $L_b$ is $[\{\$100, \$0\}, m_b]$, where $m_b(\{\$0\}) = 1/n$, and $m_b(\{\$100, \$0\}) = (n-1)/n$. Thus, we have $[u](L_r)=[1/n,1/n]$, and 
\[
[u](L_b)=\frac{n-1}n \left[ u_{\{\$100,\,\$0\}}, 1-v_{\{\$100,\,\$0\}} \right].
\]
So, $L_b$ is strictly preferred to $L_r$ whenever
\[
\frac{n-1}n u_{\{\$100,\,\$0\}} > \frac1n,
\]
i.e., whenever $u_{\{\$100,\,\$0\}} > 1/(n-1)$, and $L_r$ is strictly preferred to $L_b$ whenever
\[
\frac{n-1}n \left(1-v_{\{\$100,\,\$0\}}\right) < \frac1n,
\]
i.e., whenever $1-v_{\{\$100,\,\$0\}}<1/(n-1)$.
Hence,  $L_b$ is increasingly preferred to $L_r$ when $n$ increases, which is consistent with the findings reported in \cite{JirousekShenoy2017}. In our model, when
\[
u_{\{\$100,\,\$0\}} < \frac1{n-1} < 1-v_{\{\$100,\,\$0\}},
\]
the two lotteries $L_r$ and $L_b$ are incomparable. If forced to choose, the DM might just choose arbitrarily. As the experiment reported in \cite{JirousekShenoy2017} did not allow the respondents to express inability to choose between the two lotteries, it does not provide any evidence for or against our model.
\end{example}

\begin{figure}[htbp]
\begin{center}
\includegraphics[width=4in]{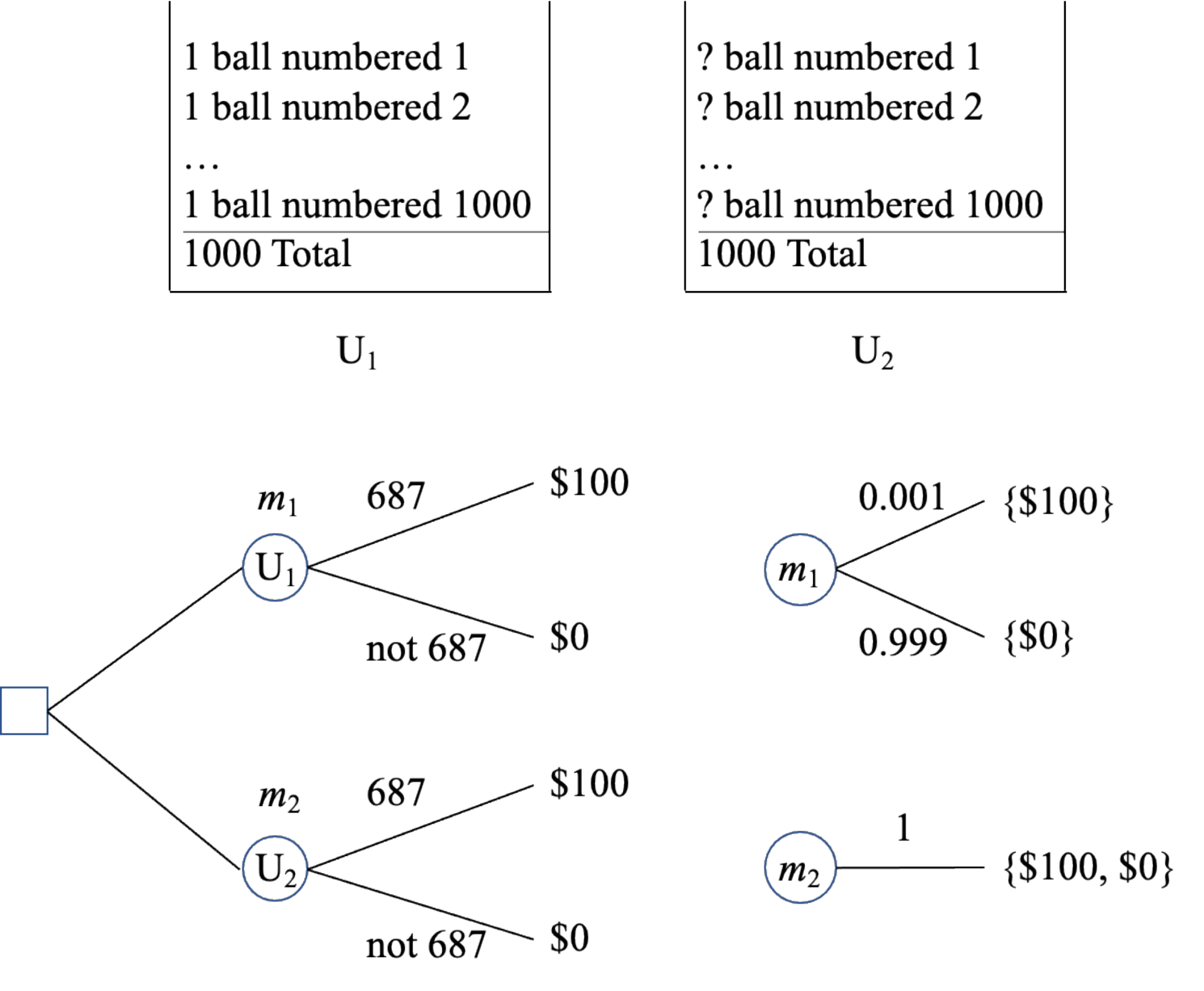}
\caption{Two urns with 1,000 balls, choices, and the corresponding belief function lotteries}
\label{fig:urnsw1000balls}
\end{center}
\end{figure}

\begin{example}[Urns with 1,000 balls]
\label{ex:1000balls}
The following example is discussed in \cite{BeckerBrownson1964}, where it is credited to Ellsberg in an oral conversation 
$($with the authors of \cite{BeckerBrownson1964}). It is also discussed 
in \cite{EinhornHogarth1986}. There are two urns, each with 1,000 balls, numbered from $1-1,000$. 
Urn $1$ has exactly one ball for each number, and there is no ambiguity. 
Urn $2$ has unknown number of balls of each number, and there is much ambiguity. One ball is to be chosen at random 
from an urn of your choosing. If the number on the ball matches a specific number, e.g., $687$, you win $\$100$, and if not, you win nothing $(\$0)$. 
Which one of the two urns will you choose? This choice problem is shown in Figure \ref{fig:urnsw1000balls}.

It is reported in \cite{BeckerBrownson1964} that many respondents chose Urn $2$. Why? 
Urn $1$ has only one ball numbered $687$, and therefore, the 
probability of winning $\$100$ if the choice is Urn $1$ is very small, $0.001$. Urn $2$ could possibly have anywhere from $0$ to $1,000$ balls 
numbered $687$. Thus, the choice of Urn $2$, although ambiguous, is appealing. Let's analyze this problem using Theorem $\ref{th:dsutility1}$.

Let $X_1$ denote the number on the ball chosen Urn $1$, and let $X_2$ denote the number on the ball chosen Urn $2$. 
$\Omega_{X_1} = \Omega_{X_2} = \{1, \ldots, 1000\}$. Function $m_{X_1}$ is a BPA for $X_1$ as follows: 
$m_{X_1}(\{1\}) = \ldots = m_{X_1}(\{1000\}) = 0.001$.
BPA $m_{X_2}$ is vacuous, i.e., $m_{X_2}(\Omega_{X_2}) = 1$.

Lottery $L_1$ corresponding to choice of Urn $1$ $($say, alternative $f_1)$ is $[\{\$100, \$0\}, m_1]$, where $m_1$ is a BPA for 
$\{\$100, \$0\}$ such that $m_1(\{\$100\}) = 0.001$, and $m_1(\{\$0\}) = 0.999$. $L_1$ is a bf reference lottery, and thus, $[u](L_1) = [0.001,0.001]$.
Lottery $L_2$ corresponding to choice of Urn $2$ $($say, alternative $f_2)$ is $[\{\$100, \$0\}, m_2 ]$, where 
$m_2$ is a vacuous BPA for $\{\$100, \$0\}$. The utility of $L_2$ is 
\[
[u](L_2) = \left[u_{\{\$100,\,\$0\}},1-v_{\{\$100,\,\$0\}}\right].
\] 
Consequently, $L_2$ is  preferred to $L_1$ as long as
\[
u_{\{\$100,\,\$0\}} \ge 0.001,
\]
a condition that is easily satisfied. This may explain why many DMs  appear  to be ambiguity-seeking in this context, i.e., prefer $L_2$ to $L_1$. 
\end{example}


\subsection{An additional assumption and the corresponding representation theorem}
\label{subsec:add_assump}
Whereas Theorem \ref{th:dsutility1} guarantees the existence of an interval valued utility function for bf lotteries, there remains the problem of practical elicitation of utilities. The maximum number of utilities to be elicited increases exponentially with the number $r$ of outcomes.  As the utilities of the worst and the best outcomes are, by construction, 0 and 1, and $w_{\{O_i\}}=0$ for each other single outcome $O_i$, the actual number of utilities to be elicited is, at most,  $2(2^r-1-r)+r-2=2^{r+1}-r-4$. By making one more reasonable assumption, we can drastically limit the number of parameters to be elicited.

Assumption \ref{a7b} below has no counterpart in the vN-M theory, but it is rooted in decision-making under ignorance \cite{Jaffray1989,Wakker2000}. For any nonempty subset of consequences $\sfa \subseteq \bO$, let $\underline{O}_{\sfa}$ and $\overline{O}_{\sfa}$ denote, respectively, the worst and the best outcome in $\sfa$. To simplify the notations, we assimilate a deterministic BPA with its focal set, and we write $\sfa \succsim \sfb$ to mean that the deterministic lottery with focal set $\sfa$ is preferred or indifferent to the deterministic lottery with focal set $\sfb$. Then our last assumption can be stated as follows.

\begin{assumption}[Dominance]
\label{a7b}
For all nonempty subsets $\sfa$ and $\textsf{b}$ of \bO, if 
$\underline{O}_\sfa \succsim \underline{O}_\sfb$ and $\overline{O}_{\sfa} \succsim \overline{O}_{\textsf{b}}$, 
then $\sfa \succsim \textsf{b}$.
\end{assumption}

Assumption \ref{a7b} implies that the preference between two deterministic lotteries with focal sets $\sfa$ and $\textsf{b}$ is determined only by the best and worst outcomes in $\sfa$ and $\textsf{b}$. In particular, when $\underline{O}_{\sfa}=\underline{O}_{\textsf{b}}$ and $\overline{O}_\sfa = \overline{O}_\sfb$, then $\sfa \sim \textsf{b}$. Although counterintuitive at first glance, this assumption cannot be avoided if we accept the PCI, i.e., if we accept that any two acts yielding the same bf lottery are equivalent, as shown by the following example. 

\begin{example}
\label{ex:dominance}
Assume that the set of outcomes is $\bO=\{\$0,\$1,\$999,\$1,000\}$. According to Assumption \ref{a7b}, a DM would be indifferent between receiving one of the prizes in $\sfa=\{\$0,\$1,\$1,000\}$, without any further information, and receiving one of the prizes in $\textsf{b}=\{\$0,\$999,\$1,000\}$. It may be argued that most DMs would strictly prefer $\sfb$ to $\sfa$. Yet, assume that the state space is $\Omega_X=\{\omega_1,\omega_2,\omega_3,\omega_4\}$, we are in a state of complete ignorance, i.e., $m_X(\Omega_X)=1$, and the deterministic lotteries $\sfa$ and $\sfb$ are generated by the acts $f_1$ and $f_2$ shown in Table \ref{tab:ex_dominance}. It is clear that $f_1$ dominates $f_2$ $($it yields at least as desirable consequences for all states of nature, and strictly preferred consequences for some states of nature$)$, so it would be paradoxical to strictly prefer $f_2$ over $f_1$, i.e., to strictly prefer $\sfb$ over $\sfa$. But $\sfa$ and $\sfb$ might also have been generated by acts $f'_1$ and $f'_2$ in Table \ref{tab:ex_dominance} and, as $f'_2$  dominates $f'_1$, it would also be paradoxical to strictly prefer $f'_1$ over $f'_2$, i.e., to strictly prefer $\sfa$ over $\sfb$. Consequently,  indifference between $\sfa$ and $\sfb$ seems to be the only rational option in this case.

\begin{table}
\caption{Acts in Example \ref{ex:dominance} \label{tab:ex_dominance}}
\begin{center}
\begin{tabular}{ccccc}
\hline
 & $\omega_1$ & $\omega_2$ & $\omega_3$ &  $\omega_4$\\
 \hline
$f_1$ & 0 & 1 & 1000 & 1000\\
$f_2$ & 0 & 0 & 999 & 1000\\
$f'_1$ & 0 & 1 & 1 & 1000\\
$f'_2$ & 0 & 999 & 1000 & 1000\\
\hline
\end{tabular}
\end{center}
\end{table}
\end{example}

Generalizing Example \ref{ex:dominance}, Jaffray \cite{Jaffray1989} shows that, whenever $\underline{O}_\sfa \succsim \underline{O}_\sfb$ and $\overline{O}_{\sfa} \succsim \overline{O}_{\textsf{b}}$, we can always construct a state space $\Omega_X$ and two acts $f_1$ and $f_2$ such that $f_1[\Omega_X]=\sfa$, $f_2[\Omega_X]=\sfb$ and, for any $\omega\in\Omega_X$, $f_1(\omega) \succsim f_2(\omega)$. As $f_1$ yields at least as desirable outcomes as $f_2$ under any state of nature, it should be preferred whatever our beliefs on $\Omega_X$, and in particular when $m_X(\Omega_X)=1$. Hence, we should have $\sfa \succsim \sfb$.

Assumption \ref{a7b} implies  that $\sfa \succsim \underline{O}_{\sfa}$ and  $ \overline{O}_{\sfa} \succsim \sfa$. From Assumption \ref{a6b}, we thus have $u_\sfa \ge u_{\underline{O}_{\sfa}}$ and $1-v_\sfa  \le u_{\overline{O}_{\sfa}}$. Consequently,  the utility bounds $u_{\sfa}$ and $1-v_\sfa$ of subset $\sfa$ can be written as convex combinations of the utilities of its worst and best outcomes:
\begin{subequations}
\label{eq:jaffray2}
\begin{align}
u_{\sfa} &= \alpha(\underline{O}_{\sfa},\overline{O}_{\sfa}) \, u_{\underline{O}_{\sfa}} + 
\left(1-\alpha(\underline{O}_{\sfa},\overline{O}_{\sfa})\right) \, u_{\overline{O}_{\sfa}}\\
1-v_{\sfa} &= \beta(\underline{O}_{\sfa},\overline{O}_{\sfa}) \, u_{\underline{O}_{\sfa}} + 
\left(1-\beta(\underline{O}_{\sfa},\overline{O}_{\sfa})\right) \, u_{\overline{O}_{\sfa}},
\end{align}
\end{subequations}
where $\alpha(\underline{O}_\sfa,\overline{O}_\sfa)$ and $\beta(\underline{O}_\sfa,\overline{O}_\sfa) $ are two coefficients depending only on the best and worst 
outcomes in $\sfa$, such that $0\le \alpha(\underline{O}_\sfa,\overline{O}_\sfa) \le \beta(\underline{O}_\sfa,\overline{O}_\sfa)\le 1$. In Jaffray's framework \cite{Jaffray1989}, $w_\sfa=0$ (see Section \ref{subsec:Jaffray}) and $\alpha(\underline{O}_\sfa,\overline{O}_\sfa)$ is called a \emph{local pessimism index}. In our framework, we can see the interval  $\left[\alpha(\underline{O}_\sfa,\overline{O}_\sfa),\beta(\underline{O}_\sfa,\overline{O}_\sfa)\right]$ as an interval-valued local pessimism index reflecting both the DM's attitude to ambiguity and indeterminacy. Assumption \ref{a7b} thus brings the maximum number of parameters to be elicited from $2^{r+1}-r-4$ down to $r(r-1)+r-2=r^2-2$. The above discussion can be summarized in the form of the following representation theorem (generalizing Theorem 2 in \cite{Jaffray1989}).

\begin{theorem}[Interval-valued local pessimism index]
\label{th:dsutility3}
Suppose $L = [\bO, m]$ and $L^\prime = [\bO, m^\prime]$ are bf lotteries on $\bO$. 
If the preference relation $\succsim$ on $\mathcal{L}_{bf}$ satisfies Assumptions \ref{a1b}--\ref{a7b}, then there are numbers $u_{O}$ associated with outcomes $O \in \bO$ and two mappings $\alpha$ and $\beta$ from
\[
{\cal O}=\{(O,O')\in \bO^2: O'\succsim O\}
\] 
to $[0,1]$, with $\alpha\le\beta$, such that  $L \succsim L'$ if and only if
\begin{multline*}
\sum_{\emptyset \neq \sfa\subseteq \bO} m(\sfa)\,\left[\alpha(\underline{O}_{\sfa},\overline{O}_{\sfa}) u_{\underline{O}_{\sfa}} + (1- \alpha(\underline{O}_{\sfa},\overline{O}_{\sfa})) u_{\overline{O}_{\sfa}}\right] \ge \\
\sum_{\emptyset \neq \sfa\subseteq \bO} m^\prime(\sfa)\,\left[\alpha(\underline{O}_{\sfa},\overline{O}_{\sfa}) u_{\underline{O}_{\sfa}} + (1- \alpha(\underline{O}_{\sfa},\overline{O}_{\sfa})) u_{\overline{O}_{\sfa}}\right]
\end{multline*}
and
\begin{multline*}
\sum_{\emptyset \neq \sfa\subseteq \bO} m(\sfa)\,\left[\beta(\underline{O}_{\sfa},\overline{O}_{\sfa}) u_{\underline{O}_{\sfa}} + (1- \beta(\underline{O}_{\sfa},\overline{O}_{\sfa})) u_{\overline{O}_{\sfa}}\right] \ge \\
\sum_{\emptyset \neq \sfa\subseteq \bO} m^\prime(\sfa)\,\left[\beta(\underline{O}_{\sfa},\overline{O}_{\sfa}) u_{\underline{O}_{\sfa}} + (1- \beta(\underline{O}_{\sfa},\overline{O}_{\sfa})) u_{\overline{O}_{\sfa}}\right],\end{multline*}
where $\underline{O}_{\sfa}$ and $\overline{O}_{\sfa}$ are, respectively, the worst and the best outcomes in $\sfa\subseteq \bO$. Thus, for a bf lottery $L = [\bO, m]$, we can define 
\begin{equation*}
[u](L) = [u, 1-v]
\end{equation*}
as an interval-valued utility of $L$, with
\begin{equation*}
u=\sum_{\emptyset \neq \sfa\subseteq \bO} m(\sfa)\,\left[\alpha(\underline{O}_{\sfa},\overline{O}_{\sfa}) u_{\underline{O}_{\sfa}} + (1- \alpha(\underline{O}_{\sfa},\overline{O}_{\sfa})) u_{\overline{O}_{\sfa}}\right] 
\end{equation*}
and
\[
 1-v=\sum_{\emptyset \neq \sfa\subseteq \bO} m(\sfa)\,\left[\beta(\underline{O}_{\sfa},\overline{O}_{\sfa}) u_{\underline{O}_{\sfa}} + (1- \beta(\underline{O}_{\sfa},\overline{O}_{\sfa})) u_{\overline{O}_{\sfa}}\right].
\]

 Also, this utility function is unique up to a strictly increasing affine transformation. 
\end{theorem}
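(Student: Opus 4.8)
The plan is to obtain the statement by substituting the reparametrization \eqref{eq:jaffray2} into the representation already furnished by Theorem \ref{th:dsutility1}, whose hypotheses are implied by Assumptions \ref{a1b}--\ref{a7b}. That theorem supplies, for every nonempty $\sfa\subseteq\bO$, the bounds $u_\sfa$ and $1-v_\sfa$ together with the characterization that $L\succsim L'$ holds exactly when $\sum_{\emptyset\neq\sfa\subseteq\bO} m(\sfa)\,u_\sfa \ge \sum_{\emptyset\neq\sfa\subseteq\bO} m'(\sfa)\,u_\sfa$ and $\sum_{\emptyset\neq\sfa\subseteq\bO} m(\sfa)\,v_\sfa \le \sum_{\emptyset\neq\sfa\subseteq\bO} m'(\sfa)\,v_\sfa$. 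Since $m$ and $m'$ are BPAs, $\sum_{\emptyset\neq\sfa\subseteq\bO} m(\sfa) = \sum_{\emptyset\neq\sfa\subseteq\bO} m'(\sfa) = 1$, so I would first rewrite the second condition in the equivalent form $\sum m(\sfa)(1-v_\sfa) \ge \sum m'(\sfa)(1-v_\sfa)$. Both conditions then take the same ``$\ge$'' shape, governed respectively by the lower bounds $u_\sfa$ and the upper bounds $1-v_\sfa$.

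The heart of the argument is the reparametrization \eqref{eq:jaffray2} of these bounds through the extreme outcomes of each focal set. Applying Assumption \ref{a7b} (dominance) with reflexivity gives $\sfa\succsim\underline{O}_\sfa$ and $\overline{O}_\sfa\succsim\sfa$, whence Assumption \ref{a6b} (monotonicity) yields $u_{\underline{O}_\sfa}\le u_\sfa$ and $1-v_\sfa\le u_{\overline{O}_\sfa}$; together with $u_\sfa\le 1-v_\sfa$ (because $u_\sfa$ and $1-v_\sfa$ are the belief and plausibility of the best outcome in the reference lottery equivalent to $\sfa$, and belief never exceeds plausibility) and with $u_{\underline{O}_\sfa}\le u_{\overline{O}_\sfa}$ from \eqref{eq:consistency1}, both $u_\sfa$ and $1-v_\sfa$ lie inside $[u_{\underline{O}_\sfa},u_{\overline{O}_\sfa}]$ and so admit the convex-combination form \eqref{eq:jaffray2}, defining the coefficients $\alpha(\underline{O}_\sfa,\overline{O}_\sfa)$ and $\beta(\underline{O}_\sfa,\overline{O}_\sfa)$, whose ordering is fixed by the inequality $u_\sfa\le 1-v_\sfa$. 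I expect the main obstacle to be verifying that these coefficients genuinely depend on $\sfa$ only through the pair $(\underline{O}_\sfa,\overline{O}_\sfa)$: this is precisely the indifference content of Assumption \ref{a7b}, since $\underline{O}_\sfa\sim\underline{O}_\sfb$ and $\overline{O}_\sfa\sim\overline{O}_\sfb$ force $\sfa\sim\sfb$ and hence $u_\sfa=u_\sfb$, $v_\sfa=v_\sfb$. The degenerate case $\underline{O}_\sfa\sim\overline{O}_\sfa$ should be treated apart: there $u_{\underline{O}_\sfa}=u_{\overline{O}_\sfa}$, the mixing weights are immaterial, and any admissible convention may be adopted.

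It then remains to substitute $u_\sfa=\alpha(\underline{O}_\sfa,\overline{O}_\sfa)\,u_{\underline{O}_\sfa}+(1-\alpha(\underline{O}_\sfa,\overline{O}_\sfa))\,u_{\overline{O}_\sfa}$, and the analogous expression for $1-v_\sfa$ with $\beta$, into the two ``$\ge$'' inequalities of the first paragraph; this reproduces verbatim the two displayed inequalities of the theorem, and setting $u$ and $1-v$ equal to the corresponding sums defines the interval-valued utility. For uniqueness I would invoke the affine-uniqueness already granted by Theorem \ref{th:dsutility1}: the outcome utilities $u_O$ are determined up to a common map $u_O\mapsto a\,u_O+b$ with $a>0$, and because $\alpha$ and $\beta$ enter \eqref{eq:jaffray2} only as mixing weights, i.e.\ as ratios of utility differences, they are invariant under such transformations. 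Hence the full representation $(\{u_O\},\alpha,\beta)$ is pinned down up to exactly this affine rescaling of the $u_O$, which is the claimed uniqueness.
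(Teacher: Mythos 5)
Your proof is correct and follows essentially the same route as the paper, which establishes this theorem not by a separate proof but by the discussion preceding it: Assumption \ref{a7b} plus reflexivity gives $\sfa\succsim\{\underline{O}_\sfa\}$ and $\{\overline{O}_\sfa\}\succsim\sfa$, Assumption \ref{a6b} converts these into bounds on $u_\sfa$ and $1-v_\sfa$, the convex-combination form \eqref{eq:jaffray2} follows, and everything is substituted into Theorem \ref{th:dsutility1}; your additions (well-definedness of $\alpha,\beta$ as functions of the extreme-outcome pair via the indifference content of Assumption \ref{a7b}, the degenerate case $\underline{O}_\sfa\sim\overline{O}_\sfa$, and the affine invariance of the coefficients) are details the paper leaves implicit. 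One caveat: under the parametrization \eqref{eq:jaffray2}, the constraint $u_\sfa\le 1-v_\sfa$ forces $\alpha(\underline{O}_\sfa,\overline{O}_\sfa)\ge\beta(\underline{O}_\sfa,\overline{O}_\sfa)$ whenever $u_{\underline{O}_\sfa}<u_{\overline{O}_\sfa}$, which is the reverse of the ``$\alpha\le\beta$'' asserted in the theorem statement and in the paper's text (the text's limiting cases $\alpha=\beta=0$ giving the minimum, etc., are instead consistent with $\alpha,\beta$ weighting the \emph{best} outcome); this inconsistency is the paper's, not yours, and your deliberately noncommittal phrase ``whose ordering is fixed'' sidesteps rather than resolves it.
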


\subsection{A simpler model}
\label{subsec:simplermodel}

Strat \cite{Strat1990} proposes independently, but without any axiomatic justification, a  criterion similar to that of Theorem \ref{th:dsutility3} but with a constant parameter $\alpha(\underline{O}_{\sfa},\overline{O}_{\sfa})=\beta(\underline{O}_{\sfa},\overline{O}_{\sfa})=\alpha$ that does not depend on the subset $\sfa$.  In a similar way, we can assume that the lower and upper pessimism indices take on constant values: $\alpha(\underline{O}_{\sfa},\overline{O}_{\sfa})=\alpha$ and $\beta(\underline{O}_{\sfa},\overline{O}_{\sfa})=\beta$, with $0\le\alpha\le\beta\le 1$. This simple model depends on only $r$ parameters: the utilities of the single outcomes $u_{\{O_i\}}$ for $i=2,\ldots,r-1$ and the two coefficients $\alpha$ and $\beta$. It allows us to recover some existing decision criteria as special cases:
\begin{itemize}
\item When $\alpha=\beta$, the utility interval $[u](L)$ is reduced to a point $u(L)$ and we get Strat's criterion, also called the generalized Hurwicz criterion in \cite{Denoeux2019}, which is a special case of the real-valued utility \eqref{eq:dsutility} in Corollary \ref{th:dsutility2};

\item In particular, when $\alpha=\beta=0$, then 
\begin{equation}
\label{eq:lower}
u(L)= \sum_{\sfa \subseteq \bO} m(\sfa) \;  \min_{O\in \sfa} u_O,
\end{equation}
which is the lower expected utility $\underline{u}_m$ with respect to $m$ \cite{Dempster1967,Shafer1981,DempsterKong1987}. As shown by Gilboa and Schmeidler \cite{GilboaSchmeidler1994}, $\underline{u}_m$ is also the Choquet expected utility \cite{Choquet1953} with respect to the belief function $Bel_m$ corresponding to $m$. The preference relation between bf lotteries then corresponds to the maximin criterion, which reflects a  pessimistic attitude of the DM.

\item Similarly, when $\alpha=\beta=1$, we get 
\begin{equation}
\label{eq:upper}
u(L)= \sum_{\sfa \subseteq \bO} m(\sfa) \;  \max_{O\in \sfa} u_O,
\end{equation}
which is the upper expected utility $\overline{u}_m$, or the Choquet expected utility with respect to the plausibility function $Pl_m$ corresponding to $m$ \cite{GilboaSchmeidler1994}. The corresponding decision strategy corresponds to the maximax criterion, which models an optimistic attitude of the DM.

\item When $\alpha=0$ and $\beta=1$, then the interval-valued utility is equal to the lower-upper expected utility interval
\[
[u](L)= \left[\underline{u}_m, \overline{u}_m\right].
\]
The corresponding preference relation is then the \emph{interval bound dominance} relation \cite{Destercke2010,Denoeux2019}, defined by
\begin{equation}
\label{eq:interv_dom}
L \succsim L' \Leftrightarrow \left(\underline{u}_m \ge \underline{u}_{m'} \quad \text{and} \quad \overline{u}_m \ge \overline{u}_{m'}\right).
\end{equation}
\end{itemize}
 
 In the general case, we have
 \begin{equation}
 \underline{u}_m \le u \le 1-v \le  \overline{u}_m,
 \end{equation}
where $u$ and $w$ are as in Eq. \eqref{eq:uw}. Thus, the interval-valued utility $[u](L)$ of lottery $[\bO, m]$ as defined in 
Theorem \ref{th:dsutility1} is always included in the lower-upper expected utility interval, and the preference relation induced by our interval-valued utilities compares more bf lotteries than the interval dominance relation \eqref{eq:interv_dom}. The lower and upper expectations defined by Eqs. \eqref{eq:lower}-\eqref{eq:upper} can thus be seen as lower and upper bounds of the interval utility of a lottery $L=[\bO,m]$ and could be used as conservative estimates if parameters $\alpha$ and $\beta$ cannot be elicited.

\begin{example}
Assume that the set of outcome is $\bO=\{\$0, \$10, \$50, \$100\}$. The full model (without Assumption \ref{a7b}) requires the assessment of 24 parameters: $u_{\{\$10\}}$, $u_{\{\$50\}}$, and the $u$ and $v$ values for the 11 subsets of $\bO$ with cardinality strictly greater than 1. With Assumption \ref{a7b}, the number of parameters to be elicited is down to 14: $u_{\{\$10\}}$, $u_{\{\$50\}}$, and the $\alpha$ and $\beta$ values for the following pairs of worst and best outcomes: $(\$0,\$10)$, $(\$0,\$50)$, $(\$0,\$100)$, $(\$10,\$50)$, $(\$10,\$100)$ and $(\$50,\$100)$. Assuming $\alpha$ and $\beta$ to be constant brings the number of parameters to only 4.
\end{example}

\paragraph{A practical elicitation procedure} Whatever the simplifying assumptions made, the trickiest part for eliciting the interval-valued utility of a bf lottery resides in the determination of the equivalent bf reference lottery for any non-singleton focal set $\sfa$ (if $\alpha$ and $\beta$ are assumed to be constant,  this determination needs to be done for only one non-singleton focal set). Let $\tsfa=[\bO_2, (u_\sfa,v_\sfa,w_\sfa)]$ be the bf reference lottery equivalent to $\sfa$ (assumed to exist from Assumption \ref{a3b}). For any a probabilistic reference lottery $L=[\bO_2,(u, 1-u)]$, there are three cases:
\begin{enumerate}
\item If $u \ge 1-v_\sfa$, then $L \succ \tsfa$;
\item If $u \le u_\sfa$, then $ \tsfa \succ L$;
\item If $u_\sfa < u \le 1-v_\sfa$ or $u_\sfa \le u < 1-v_\sfa$, then $\tsfa$ and $L$ are incomparable.
\end{enumerate}
To determine $u_\sfa$ and $v_\sfa$, we can thus start with $u=0$ and gradually increase $u$ until $\tsfa$ and $L$ become incomparable, which gives us $u_\sfa$, and then gradually decrease $u$ from $u=1$ until  $\tsfa$ and $L$ become incomparable, which gives us $v_\sfa$. Parameters $\alpha$ and $\beta$ are then obtained by solving Eqs. \eqref{eq:jaffray2}. This procedure was used implicitly in Example \ref{ex:ex3}.

\section{Comparison with Some Existing Decision Theories}
\label{sec:comparison}
In this section, we compare our utility theory to Jaffray's linear utility theory \cite{Jaffray1989}, Smets' two-level decision theory \cite{Smets2002}, decision theories for possibility theory \cite{Duboisetal1999,GiangShenoy2005} and partially consonant belief functions \cite{GiangShenoy2011},  and Shafer's constructive decision theory \cite{Shafer2016}.


\subsection{Comparison with Jaffray's axiomatic theory}
\label{subsec:Jaffray}
Jaffray's axiomatic theory is based on considering the set of all belief functions on $\bO$ as a mixture set as follows. 
Suppose $m_1$ and $m_2$ are BPAs for $\bO$, and suppose $\lambda \in [0, 1]$. Then $m$ defined as:
\begin{equation}
\label{eq:mixture}
m(\sfa) = \lambda\,m_1(\sfa) + (1 - \lambda) m_2(\sfa)
\end{equation}
for all $\sfa \in 2^{\bO}$, is a BPA for $\bO$. BPA $m$ can be written as $m = \lambda\,m_1 + (1-\lambda)m_2$, and called a \emph{mixture} of $m_1$ and $m_2$. Using the Jensen-version \cite{Jensen1967} of vN-M axiom system, Jaffray uses the following axioms, all of which are expressed using mixture BPA functions:

\begin{assumption}[Completeness and transitivity]
\label{a1j}
The relation $\succsim$ is complete and transitive over $\mathcal{L}_{bf}$.
\end{assumption}

\begin{assumption}[Independence]
\label{a2j}
For all $L_1 = [\bO, m_1]$ and $L_2 = [\bO, m_2]$ in $\mathcal{L}_{bf}$, 
and $\lambda \in (0, 1)$, 
$L_1 \succ L_2$ implies $[\bO, \lambda\,m_1 + (1-\lambda)\,m] \succ [\bO, \lambda\,m_2 + (1-\lambda)\,m]$.
\end{assumption}

\begin{assumption}[Continuity]
\label{a3j}
For all $L_1 = [\bO, m_1]$, $L_2 = [\bO, m_2]$, and $L_3 = [\bO, m_3]$ in 
$\mathcal{L}_{bf}$ 
such that 
$L_1 \succ L_2 \succ L_3$, there exists $\lambda$ and $\mu$ in $(0, 1)$ such that 
\[
[\bO, \lambda\,m_1 + (1-\lambda)\,m_3] \succ [\bO, m_2] \succ [\bO, \mu\,m_1 + (1-\mu)\,m_3].
\]
\end{assumption}

\begin{theorem}[Jaffray's representation theorem \cite{Jaffray1989}]
The preference relation $\succsim$ on $\mathcal{L}_{bf}$ satisfies Assumptions \ref{a1j}--\ref{a3j} if and only if there exists a utility function $u:\mathcal{L}_{bf} \to \mathbb{R}$ such that for any lottery $L = [\bO, m]$ in $\mathcal{L}_{bf}$,
\begin{equation}
\label{eq:uJaffray}
u(L) = \sum_{\emptyset \neq \sfa \subseteq \bO} m(\sfa)\,u_{\sfa}
\end{equation}
where $u_{\sfa} = u([\bO, m_{\sfa}^d])$, and $m_{\sfa}^d$ is a deterministic BPA for $\bO$ such that $m_{\sfa}^d(\sfa) = 1$.
\end{theorem}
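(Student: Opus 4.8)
The plan is to prove both directions of Jaffray's representation theorem, treating $\mathcal{L}_{bf}$ (the set of all BPAs on $\bO$) as a \emph{mixture set} under the convex combination in Eq.~\eqref{eq:mixture}, and then invoking the standard von Neumann–Morgenstern / Jensen mixture-set machinery. The key structural observation is that every BPA $m$ can be written as a finite mixture of deterministic BPAs indexed by its focal sets: if $\sfa_1,\ldots,\sfa_k$ are the focal sets of $m$, then $m=\sum_{i=1}^k m(\sfa_i)\,m^d_{\sfa_i}$, where $m^d_{\sfa_i}$ is the deterministic BPA concentrated on $\sfa_i$. This lets me identify $\mathcal{L}_{bf}$ with the simplex of probability distributions over the finite set of ``pure'' objects $\{m^d_\sfa : \emptyset\neq\sfa\subseteq\bO\}$, so the whole problem reduces to ordinary linear utility on a finite-dimensional mixture set.

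The easy direction is to verify that a preference relation induced by a utility of the form \eqref{eq:uJaffray} satisfies Assumptions~\ref{a1j}--\ref{a3j}. Here I would simply note that $u(L)$ is an affine (linear-plus-constant) functional of $m$ in the sense that, by Eq.~\eqref{eq:mixture}, $u([\bO,\lambda m_1+(1-\lambda)m_2]) = \lambda\,u([\bO,m_1]) + (1-\lambda)\,u([\bO,m_2])$; this affinity, combined with the fact that $\succsim$ is defined by comparing real numbers, yields completeness and transitivity (Assumption~\ref{a1j}) immediately, the independence property (Assumption~\ref{a2j}) by plugging the affine identity into both sides and cancelling the common $(1-\lambda)u([\bO,m])$ term, and continuity (Assumption~\ref{a3j}) because $\lambda\mapsto u([\bO,\lambda m_1+(1-\lambda)m_3])$ is a continuous (indeed affine) function of $\lambda$ taking every value strictly between $u(L_1)$ and $u(L_3)$, so the required $\lambda,\mu$ exist by the intermediate value theorem.

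The substantive direction is the converse: assuming $\succsim$ satisfies Assumptions~\ref{a1j}--\ref{a3j}, construct the utility function and prove it has the form \eqref{eq:uJaffray}. The plan is to invoke the Herstein–Milnor / Jensen mixture-set representation theorem (cited as \cite{HersteinMilnor1953,Jensen1967} in the paper), which states precisely that a complete, transitive preference relation on a mixture set satisfying the independence and continuity (Archimedean) axioms admits a mixture-preserving real-valued representation $u$, i.e.\ a $u$ with $u(\lambda m_1+(1-\lambda)m_2)=\lambda u(m_1)+(1-\lambda)u(m_2)$, unique up to positive affine transformation. Once such a mixture-linear $u$ is in hand, I would define $u_\sfa := u([\bO,m^d_\sfa])$ for each nonempty $\sfa\subseteq\bO$ and apply mixture-linearity to the decomposition $m=\sum_i m(\sfa_i)\,m^d_{\sfa_i}$: by induction on the number of focal sets, repeated use of the two-term mixture identity gives $u([\bO,m])=\sum_i m(\sfa_i)\,u_{\sfa_i}=\sum_{\emptyset\neq\sfa\subseteq\bO} m(\sfa)\,u_\sfa$, which is exactly \eqref{eq:uJaffray}.

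The main obstacle is verifying that $(\mathcal{L}_{bf},\text{mixture})$ genuinely satisfies the axioms of an abstract mixture set (associativity and the identity/commutativity conditions on the mixing operation) so that the Herstein–Milnor theorem applies; this is where I must check that Eq.~\eqref{eq:mixture} defines a well-behaved convex structure. This turns out to be routine because $\mathcal{L}_{bf}$ is literally a convex subset of the Euclidean space $\mathbb{R}^{2^r-1}$ of mass vectors, and the stated mixture is ordinary convex combination there, so all mixture-set axioms hold automatically. The only point requiring genuine care is the induction extending the two-term mixture-linearity (which is all the Herstein–Milnor conclusion directly provides) to the $k$-term decomposition of an arbitrary BPA; this is a standard finite induction, rewriting $\sum_{i=1}^k \mu_i m^d_{\sfa_i}$ as $\mu_1 m^d_{\sfa_1}+(1-\mu_1)\bigl(\sum_{i=2}^k \tfrac{\mu_i}{1-\mu_1} m^d_{\sfa_i}\bigr)$ and applying the inductive hypothesis to the renormalized tail, and I would remark that it introduces no difficulty.
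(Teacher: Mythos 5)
Your proposal is correct and follows essentially the route the paper itself indicates: the paper gives no proof of this theorem (it is quoted from \cite{Jaffray1989}), but its framing---viewing $\mathcal{L}_{bf}$ as a mixture set under Eq.~\eqref{eq:mixture} and stating the axioms in the Jensen form \cite{Jensen1967}---is exactly the Herstein--Milnor/Jensen mixture-set argument plus decomposition of an arbitrary BPA into deterministic BPAs that you spell out. Your two checkable points (that $\mathcal{L}_{bf}$ is literally a simplex in $\mathbb{R}^{2^r-1}$, so the mixture-set axioms hold automatically, and the finite induction extending two-term affinity to the focal-set decomposition) are both sound, so there is no gap.
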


Thus, Jaffray's axioms result in the same solution as that of Corollary \ref{th:dsutility2}, which is a special case of Theorem \ref{th:dsutility1}. As Jaffray's axioms do not use Dempster's rule explicitly, it is not clear whether Eq. \eqref{eq:uJaffray} applies to the D-S framework or not. The mixture BPA $m$ derived from BPAs $m_1$ and $m_2$ using Eq. \eqref{eq:mixture} is not Dempster's combination rule, although Eq. \eqref{eq:mixture} can be derived from a belief function model using Dempster's rule. In \cite{Jaffray1994}, Jaffray writes:
\begin{quote}``It has been shown by \cite{Jaffray1989, Jaffray1991} that, in the \emph{lower probability interpretation} of belief functions, the axioms of von Neumann-Morgenstern linear utility theory could be justified with the same arguments as in the case of risk (probabilized uncertainty)'' (emphasis added).
\end{quote} 
Also, in \cite{JaffrayWakker1993}, Jaffray and Wakker write: 
\begin{quote}
``Given the widespread use of belief functions, it is remarkable that only recently were decision criterion for the above-mentioned type of situations proposed and axiomatized in \cite{Jaffray1989}. He uses as a primitive axiom the independence condition with respect to mixtures of belief functions over the outcomes to generalize expected utility: \cite{Jaffray1991} justifies this condition by means of a \emph{lower-probability interpretation} of belief functions.'' (emphasis added).
\end{quote}

Thus, it is clear that Jaffray has in mind the credal set semantics of belief functions, which are inconsistent with Dempster's combination rule\footnote{It is possible that in 1989, it was not well understood that credal set semantics of belief functions were incompatible with Dempster's combination rule. This was apparently clarified in the early 1990s by Shafer in \cite{Shafer1990, Shafer1992} and also by Fagin and Halpern in \cite{FaginHalpern1991, HalpernFagin1992}}.

In comparison, our representation theorem is based on a set of axioms making use of the basic constructs of DS theory (namely, Dempster's combination rule, marginalization, and conditional embedding), we provide more compelling arguments supporting Eq. \eqref{eq:uJaffray} as a natural definition of the real-valued utility of a bf lottery in the D-S theory.

Also, there is no explicit notion of a bf reference lottery in Jaffray's framework. Thanks to our continuity axiom (Assumption \ref{a3b}), the interval-valued utility $[u_{\sfa}, 1-v_{\sfa} ]$ in our framework receives a simple interpretation as an interval-valued probability of a best outcome $O_1$, in a bf reference lottery $[\bO_2, m_\sfa]$ that is indifferent to $\sfa$ and such that $m_\sfa(\{O_1\}) = u_\sfa$, $m_\sfa(\{O_r\}) = v_\sfa$, and $m_\sfa(\bO_2) = 1-u_\sfa-v_\sfa$. We believe that this simple interpretation can be very helpful when eliciting utilities from DMs, as discussed in Section \ref{subsec:simplermodel}. 


\subsection{Comparison with Smets' decision theory}
\label{subsec:pignistic}
Smets' decision theory \cite{Smets2002} is a two-level framework where beliefs, represented by belief functions, are held at a credal level. 
When a DM has to make a decision, the marginal belief function for a variable of interest is transformed into a PMF, and the Bayesian 
expected utility framework is then used to make a decision. 

Smets uses a transformation called the \emph{pignistic} transform to transform belief functions into PMFs. 
This transform is justified in \cite{Smets2005} using a mixture property as follows. Let $T$ denote the belief-PMF transformation.  
Smets \cite{Smets2005} argues that this transformation should be linear, i.e., we should have, for any $\lambda \in [0,1]$, 
\begin{equation}
\label{eq:mixtureconsistency}
T(\lambda\,m_1 + (1-\lambda)\,m_2) = \lambda T(m_1) + (1-\lambda)T(m_2).
\end{equation}
The unique transformation $T$ verifying  (\ref{eq:mixtureconsistency}) is the pignistic transformation defined as $T(m)=BetP_m$ with
\begin{equation}
\label{eq:betp}
BetP_m(O)=\sum_{\sfa \subseteq \bO} \frac{m(\sfa)}{|\sfa|} \, I(O \in \sfa)
\end{equation}
for all $O\in\bO$. The pignistic PMF $BetP_m$ is mathematically identical to the  
Shapley value in cooperative game theory \cite{shapley53}. In \cite{Smets2005}, Smets attempts to derive Eq. \eqref{eq:mixtureconsistency} 
from the maximum expected utility principle. The argument, however, is quite  technical and not very compelling.

Given the definition in Eq. \eqref{eq:betp}, the expected utility of a bf lottery $L=[\bO,m]$ according to the pignistic PMF is
\begin{subequations}
\label{eq:pignistic}
\begin{align}
u_{BetP}(L)&=\sum_{O \in \bO} BetP_m(O) \, u_{\{O\}}\\
&=\sum_{O \in \bO} \left(\sum_{\sfa \subseteq \bO} \frac{m(\sfa)}{|\sfa|} \, I(O \in \sfa)\right) u_{\{O\}}\\
&=\sum_{\sfa \subseteq \bO}  m(\sfa) \left(\frac{1}{|\sfa|} \sum_{O \in \sfa} u_{\{O\}}\right).
\end{align}
\end{subequations}
It is a special case of Eq. \eqref{eq:dsutility}, with
\[
u_{\sfa}=\frac{1}{|\sfa|} \sum_{O \in \sfa} u_{\{O\}}.
\]
Smets' decision theory   thus  amounts to assuming that a DM is indifferent between a bf lottery that gives them an outcome in $\sfa$ for sure, 
and a bf reference lottery in which the probability of the best outcome is equal to the average utilities of the outcomes in $\sfa$. This is consistent with our Assumptions \ref{a1b}--\ref{a6b}, but it is inconsistent with Assumption \ref{a7b}. For instance, in Example \ref{ex:dominance}, a DM using the pignistic criterion would  strictly prefer $\sfb$ to $\sfa$, even though the act $f_1$ generating $\sfa$ dominates the act $f_2$ generating $\sfb$. Moreover, this restricted model does not have any parameter to represent a DM's attitude toward ambiguity. As a result, it is unable to explain Ellsberg's paradox and the ambiguity aversion of human DMs as described, e.g., in the examples presented in Section \ref{subsec:theorems}. 

\subsection{Comparison with other axiomatic theories}
In this subsection, we compare our axiomatic decision theory with other axiomatic decision theories for D-S belief functions.

Dubois et al. \cite{Duboisetal1999} describe an axiomatic decision theory for the case where uncertainty is described by a possibility distribution, which is a special case of a belief function with nested focal elements (such a belief function is said to be consonant). Dubois et al.'s decision theory consists of two sets of axioms, one for the pessimistic case, and one for the optimistic case. In contrast, Giang and Shenoy \cite{GiangShenoy2005} propose an axiomatic theory for possibility theory with one set of axioms, and the utility function is binary-valued (binary-valued utilities are possibility distribution values of $O_1$ and $O_r$ for possibilistic reference lotteries). The two axiomatic theories for possibility theory are compared in detail in \cite{GiangShenoy2005}. The latter theory is generalized in terms of partially consonant belief functions in \cite{GiangShenoy2011}. A partially consonant belief function is a belief function where the set of focal elements can divided into groups such that (a) the focal elements in different groups are disjoint, and (b) the focal elements in the same group are nested. The family of partially consonant belief functions include Bayesian belief functions and consonant belief functions.

Giang \cite{Giang2012} compares the Giang-Shenoy decision theory for partially consonant belief functions with Jaffray's axiomatic decision theory for general belief functions. Similar to Jaffray's theory, our decision theory is for the case of general belief functions. While our utility is interval-based, leading to incomplete preferences, Jaffray's theory for general belief functions, and Giang-Shenoy's theory for partially consonant belief functions based on binary utility, result in complete preferences, which is a special case of our theory. Walley \cite{Walley1987} argues that partially consonant belief functions is the only class of D-S belief functions that is consistent with the likelihood principle of statistics, but this argument applies only to statistical inference, and not to uncertain reasoning in general.


\subsection{Comparison with Shafer's constructive decision theory}
Shafer \cite{Shafer2016} argues for a decision theory that allows us to construct both goals and beliefs in response to a decision. In the vN-M utility theory, 
we start with a probabilistic lottery, and construct a utility function that reflects a DM's risk attitude. Thus, probabilities and utilities are separate constructs 
that are then combined for the computation of expected utility. In many situations, we have neither objective nor subjective probabilities. For such 
situations, Shafer argues for constructing belief functions from available evidence, and constructing a set of consistent and monotonic goals.
Given a set of actions, we examine which goals each of the actions will achieve. We use belief functions to make judgments based on evidence about 
what will happen if an action is taken. We then use these belief functions to compute the expected number of goals that an action will satisfy, and 
pick an action that satisfies the most goals. This can be generalized to the case where not all goals are equally weighted, some are weighted more than 
others.

Our utility theory is more in line with vN-M utility theory than Shafer's constructive decision theory. There is considerable literature in many domains about 
the use of utility theory for decision making. While Shafer's constructive decision theory is intriguing and may indicate an interesting direction to explore, there is much to be done before 
we can apply it in many domains for which we have a decision theory in the vN-M style.


\section{Summary and Conclusions}
\label{sec:sandc}
In this section, we summarize our proposal and sketch some future work. We start with Luce and Raiffa's version of the vN-M utility theory for probabilistic lotteries. We then consider bf lotteries, lotteries when our beliefs about the state of the world is described by DS belief functions. We use a similar set of axioms as vN-M, but first we replace each singleton outcome in a probabilistic lottery by a focal set of a BPA. Second, we  replace the reduction of compound lotteries with a corresponding axiom that uses Dempster's combination rule and belief function marginalization in place of probabilistic combination (pointwise multiplication followed by normalization) and probabilistic marginalization (addition). Third, we use a bf reference lottery with two independent parameters. The axioms lead to a decision theory that involves assessing the utility of each focal 
element of a BPA as an interval-valued utility. Interval-valued utilities lead to a partial preference relation on the set $\mathcal{L}_{bf}$ of all bf lotteries. If we use Bayesian bf reference lotteries with a single parameter, then our axiomatic framework leads to a real-valued utility function that is exactly the same as in Jaffray's linear utility theory \cite{Jaffray1989}.

The decision theory that results from our axioms is more general than that proposed by Jaffray \cite{Jaffray1989}, which can be construed as a decision theory for belief functions interpreted as generalized probabilities. Jaffray's axiomatic theory is based on a set of mixture BPAs. A mixture of two BPAs is not the same as a Dempster's combination of two BPAs, although we could construct a belief function model where the mixture BPA is obtained by Dempster's rule. Thus, it is not clear if Jaffray's linear utility theory is applicable to D-S belief function lotteries or not. Our utility theory confirms that this is indeed the case. Our bf reference lotteries lead to interval-valued utilities, and consequently, a partial preference relation on the set of all bf lotteries. 

We also compare our axiomatic theory to Smets' two-level framework \cite{Smets2002, Smets2005}, and note that his framework is too constrained to explain ambiguity-aversion or ambiguity-seeking behavior of human DMs. Other axiomatic decision theories proposed by Dubois et al. \cite{Duboisetal1999} and Giang and  Shenoy \cite{GiangShenoy2005,GiangShenoy2011} are restricted to consonant or quasi-consonant belief functions. Shafer \cite{Shafer2016} has recently published his constructive decision theory where he rejects the separation of beliefs and utilities. He proposes, instead, constructing a set of consistent and monotonic goals, and measuring the utility of each choice by the number of goals (or weighted goals) achieved by the choice. Shafer's constructive decision theory needs to be fleshed out before it can be applied to practical decision-making situations.

In practice, implementing the most general form of our axiomatic theory may need assessment of $2\,k$ parameters, where $k$ is the number of focal sets of a bf lottery. In the worst case, $k$ can be as large as $2^{|\bO|}-1$. Based on additional assumptions, we propose a model based on only two parameters, which can be interpreted as reflecting both the DM's attitude to ambiguity and their indeterminacy. This model, as well as others, will have to be further studied and developed. More generally, a rigorous methodology to elicit interval-valued  utilities remains to be designed and validated experimentally. 

Finally, in this paper, we start from the assumption that the D-S formalism is an adequate model of an agent's state of knowledge, and we derive a corresponding decision theory from a set of rationality requirements. Thus, a belief function on the state space is assumed to be given, and we generate interval-valued expected utilities for bf lotteries. A further step would be to justify not only utilities, but also the D-S calculus itself (including belief functions and Dempster's rule)  from properties of the DM's preference relation over acts,  similar to what Savage \cite{savage51} did  to provide a foundation for decision-making with probability theory, similar to what Dubois et al. \cite{dubois01b} did to justify decision-making with qualitative possibility theory, and similar to what Gul and Pesendorfer \cite{GulPesendorfer2014}, and Zhou et al. \cite{Zhouetal2018} did for decision-making with a theory of belief functions where the belief functions are interpreted as credal sets. This task remains to be done.


\section*{Acknowledgments}
This research was done during a visit by the second author to Universit\'{e} de technologie de Compi\`{e}gne (UTC) in Spring 2019. This research was funded partially by a sabbatical from the University of Kansas, and supported by the Labex MS2T, which is funded by the French Government through the program ``Investments for the future'' by the National Agency for Research (reference ANR-11-IDEX-0004-02). Thanks to S\'{e}bastien Destercke for many discussions on the topic of this manuscript, and to Suzanna Emelio for proofreading this manuscript. A short version of this paper appeared as \cite{DenoeuxShenoy2019b}, and we are grateful to the reviewers of our manuscript submitted to ISIPTA-2019 for their comments and suggestions. Thanks also to the three reviewers of our original manuscript submitted to IJAR for their comments and questions, our revision has benefitted from their comments.

\bibliographystyle{abbrv}
\bibliography{references1}

\begin{thebibliography}{10}

\bibitem{Aumann1962}
R.~Aumann.
\newblock Utility theory without the completeness axiom.
\newblock {\em Econometrica}, 30(3):445--462, 1962.

\bibitem{BeckerBrownson1964}
S.~W. Becker and F.~O. Brownson.
\newblock What price ambiguity? or the role of ambiguity in decision-making.
\newblock {\em Journal of Political Economy}, 72(1):62--73, February 1964.

\bibitem{Choquet1953}
G.~Choquet.
\newblock Theory of capacities.
\newblock {\em Annales de l'Institut Fourier}, 5:131--295, 1953.

\bibitem{deCamposetal1990}
L.~M. de~Campos, M.~T. Lamata, and S.~Moral.
\newblock The concept of conditional fuzzy measure.
\newblock {\em International Journal of Intelligent Systems}, 5(3):237--246,
  1990.

\bibitem{Dempster1967}
A.~P. Dempster.
\newblock Upper and lower probabilities induced by a multivalued mapping.
\newblock {\em Annals of Mathematical Statistics}, 38(2):325--339, 1967.

\bibitem{DempsterKong1987}
A.~P. Dempster and A.~Kong.
\newblock Comment to ``{P}robabilistic expert systems in medicine: Practical
  issues in handling uncertainty''.
\newblock {\em Statistical Science}, 2(1):32--36, 1987.

\bibitem{Denoeux2019}
T.~Denoeux.
\newblock Decision-making with belief functions: A review.
\newblock {\em International Journal of Approximate Reasoning}, 109(6):87--110,
  2019.

\bibitem{Denoeuxetal2019}
T.~Denoeux, D.~Dubois, and H.~Prade.
\newblock Representations of uncertainty in artificial intelligence: Beyond
  probability and possibility.
\newblock In P.~Marquis, O.~Papini, and H.~Prade, editors, {\em A Guided Tour
  of Artificial Intelligence Research}, volume~1, chapter~4, pages 119--150.
  Springer Verlag, 2020.

\bibitem{DenoeuxShenoy2019b}
T.~Denoeux and P.~P. Shenoy.
\newblock An axiomatic utility theory for {D}empster-{S}hafer belief functions.
\newblock In J.~de~Bock, C.~P. de~Campos, G.~de~Cooman, E.~Quaeghebeur, and
  G.~Wheeler, editors, {\em Proceedings of Machine Learning Research: Eleventh
  International Symposium on Imprecise Probabilities: Theories and
  Applications}, volume 103, pages 145--155. JMLR, 2019.

\bibitem{Destercke2010}
S.~Destercke.
\newblock A decision rule for imprecise probabilities based on pair-wise
  comparison of expectation bounds.
\newblock In C.~Borgelt, G.~Gonz{\'a}lez-Rodr{\'\i}guez, W.~Trutschnig, M.~A.
  Lubiano, M.~{\'A}. Gil, P.~Grzegorzewski, and O.~Hryniewicz, editors, {\em
  Combining Soft Computing and Statistical Methods in Data Analysis}, pages
  189--197, Berlin, Heidelberg, 2010. Springer Berlin Heidelberg.

\bibitem{Duboisetal1999}
D.~Dubois, L.~Godo, H.~Prade, and A.~Zapico.
\newblock On the possibilistic decision model: from decision under uncertainty
  to case-based decision.
\newblock {\em International Journal of Uncertainty, Fuzziness and
  Knowledge-Based Systems}, 7(6):631--670, 1999.

\bibitem{dubois01b}
D.~Dubois, H.~Prade, and R.~Sabbadin.
\newblock Decision-theoretic foundations of qualitative possibility theory.
\newblock {\em European Journal of Operational Research}, 128(3):459--478,
  2001.

\bibitem{Dubraetal2004}
J.~Dubra, F.~Maccheroni, and E.~Ok.
\newblock Expected utility theory without the completeness axiom.
\newblock {\em Journal of Economic Theory}, 115(1):118--133, 2004.

\bibitem{EinhornHogarth1986}
H.~J. Einhorn and R.~M. Hogarth.
\newblock Decision making under ambiguity.
\newblock {\em Journal of Business}, 59(4):S225--S250, 1986.

\bibitem{Ellsberg1961}
D.~Ellsberg.
\newblock Risk, ambiguity and the {S}avage axioms.
\newblock {\em Quarterly Journal of Economics}, 75(2):643--669, 1961.

\bibitem{FaginHalpern1991}
R.~Fagin and J.~Y. Halpern.
\newblock A new approach to updating beliefs.
\newblock In P.~Bonissone, M.~Henrion, L.~Kanal, and J.~Lemmer, editors, {\em
  Uncertainty in Artificial Intelligence 6}, pages 347--374. North-Holland,
  Amsterdam, 1991.

\bibitem{Fishburn1982}
P.~C. Fishburn.
\newblock {\em The Foundations of Expected Utility}.
\newblock Reidel, Dordrecht, 1982.

\bibitem{Gajdosetal2008}
T.~Gajdos, T.~Hayashi, J.-M. Tallon, and J.-C. Vergnaud.
\newblock Attitude toward imprecise information.
\newblock {\em Journal of Economic Theory}, 140(1):23--56, 2008.

\bibitem{ghirardato01}
P.~Ghirardato.
\newblock Coping with ignorance: unforeseen contingencies and non-additive
  uncertainty.
\newblock {\em Economic theory}, 17:247--276, 2001.

\bibitem{Giang2012}
P.~H. Giang.
\newblock Decision with {D}empster-{S}hafer belief functions: Decision under
  ignorance and sequential consistency.
\newblock {\em International Journal of Approximate Reasoning}, 53(7):38--53,
  2012.

\bibitem{GiangShenoy2005}
P.~H. Giang and P.~P. Shenoy.
\newblock Two axiomatic approaches to decision making using possibility theory.
\newblock {\em European Journal of Operational Research}, 162(2):450--467,
  2005.

\bibitem{GiangShenoy2011}
P.~H. Giang and P.~P. Shenoy.
\newblock A decision theory for partially consonant belief functions.
\newblock {\em International Journal of Approximate Reasoning}, 52(3):375--394,
  2011.

\bibitem{GilboaMarinacci2016}
I.~Gilboa and M.~Marinacci.
\newblock Ambiguity and the {B}ayesian paradigm.
\newblock In H.~Arl\'{o}-Costa, V.~Hendricks, and J.~van Nenthem, editors, {\em
  Readings in Formal Epistemology}, volume~1 of {\em Springer Graduate Texts in
  Philosophy}, pages 385--439. Springer, Cham, 2016.

\bibitem{GilboaSchmeidler1989}
I.~Gilboa and D.~Schmeidler.
\newblock Maxmin expected utility with a non-unique prior.
\newblock {\em Journal of Mathematical Economics}, 18(2):141--153, 1989.

\bibitem{GilboaSchmeidler1994}
I.~Gilboa and D.~Schmeidler.
\newblock Additive representations of non-additive measures and the {C}hoquet
  integral.
\newblock {\em Annals of Operations Research}, 52:43--65, 1994.

\bibitem{GulPesendorfer2014}
F.~Gul and W.~Pesendorfer.
\newblock Expected uncertain utility theory.
\newblock {\em Econometrica}, 82(1):1--39, 2014.

\bibitem{HalpernFagin1992}
J.~Y. Halpern and R.~Fagin.
\newblock Two views of belief: Belief as generalized probability and belief as
  evidence.
\newblock {\em Artificial Intelligence}, 54(3):275--317, 1992.

\bibitem{Hausner1954}
M.~Hausner.
\newblock Multidimensional utilities.
\newblock In R.~M. Thrall, C.~H. Coombs, and R.~L. Davis, editors, {\em
  Decision Processes}, pages 167--180. Wiley, New York, NY, 1954.

\bibitem{HersteinMilnor1953}
I.~N. Herstein and J.~Milnor.
\newblock An axiomatic approach to measurable utility.
\newblock {\em Econometrica}, 21(2):291--297, 1953.

\bibitem{Jaffray1989}
J.-Y. Jaffray.
\newblock Linear utility theory for belief functions.
\newblock {\em Operations Research Letters}, 8(2):107--112, 1989.

\bibitem{Jaffray1991}
J.-Y. Jaffray.
\newblock Linear utility theory and belief functions: A discussion.
\newblock In A.~Chikan, editor, {\em Progress in Decision, Utility and Risk
  Theory}, volume~13 of {\em Theory and Decision Library}. Springer, Dordrecht,
  1991.

\bibitem{Jaffray1994}
J.-Y. Jaffray.
\newblock Dynamic decision making with belief functions.
\newblock In R.~R. Yager, M.~Fedrizzi, and J.~Kacprzyk, editors, {\em Advances
  in the {D}empster-{S}hafer Theory of Evidence}, pages 331--352. Wiley, 1994.

\bibitem{JaffrayWakker1993}
J.-Y. Jaffray and P.~Wakker.
\newblock Decision making with belief functions: Compatibility and
  incompatibility with the sure-thing principle.
\newblock {\em Journal of Risk and Uncertainty}, 7(3):255--271, 1993.

\bibitem{Jensen1967}
N.~E. Jensen.
\newblock An introduction to {B}ernoullian utility theory: A discussion.
\newblock {\em Swedish Journal of Economics}, 69(3):163--183, 1967.

\bibitem{JirousekShenoy2017}
R.~Jirou\v{s}ek and P.~P. Shenoy.
\newblock Ambiguity aversion and a decision-theoretic framework using belief
  functions.
\newblock In {\em 2017 IEEE Symposium Series on Computational Intelligence
  (SSCI) Proceedings}, pages 326--332, Piscataway, NJ, 2017. IEEE.

\bibitem{JirousekShenoy2020}
R.~Jirou\v{s}ek and P.~P. Shenoy.
\newblock On properties of a new decomposable entropy of {D}empster-{S}hafer
  belief functions.
\newblock {\em International Journal of Approximate Reasoning},
  119(4):260--279, 2020.

\bibitem{LuceRaiffa1957}
R.~D. Luce and H.~Raiffa.
\newblock {\em Games and Decisions: Introduction and Critical Survey}.
\newblock Wiley, New York, NY, 1957.

\bibitem{savage51}
L.~J. Savage.
\newblock The theory of statistical decision.
\newblock {\em Journal of the American Statistical Association}, 46:55--67,
  1951.

\bibitem{Shafer1976}
G.~Shafer.
\newblock {\em A Mathematical Theory of Evidence}.
\newblock Princeton University Press, 1976.

\bibitem{Shafer1981}
G.~Shafer.
\newblock Constructive probability.
\newblock {\em Synthese}, 48(1):1--60, 1981.

\bibitem{Shafer1982}
G.~Shafer.
\newblock Belief functions and parametric models.
\newblock {\em Journal of the Royal Statistical Society, Series B},
  44(3):322--352, 1982.

\bibitem{Shafer1990}
G.~Shafer.
\newblock Perspectives on the theory and practice of belief functions.
\newblock {\em International Journal of Approximate Reasoning},
  4(5--6):323--362, 1990.

\bibitem{Shafer1992}
G.~Shafer.
\newblock Rejoinders to comments on ``{P}erspectives on the theory and practice
  of belief functions''.
\newblock {\em International Journal of Approximate Reasoning}, 6(3):445--480,
  1992.

\bibitem{Shafer2016}
G.~Shafer.
\newblock Constructive decision theory.
\newblock {\em International Journal of Approximate Reasoning}, 79(12):45--62,
  2016.

\bibitem{shapley53}
L.~S. Shapley.
\newblock A value for $n$-person games.
\newblock In H.~Kuhn and A.~W. Tucker, editors, {\em Contributions to the
  Theory of Games}, volume~2, pages 307--317. Princeton University Press,
  Princeton, NJ, 1953.

\bibitem{Smets1978}
P.~Smets.
\newblock {\em Un mod\`ele math\'ematico-statistique simulant le processus du
  diagnostic m\'edical}.
\newblock PhD thesis, Universit\'e Libre de Bruxelles, 1978.

\bibitem{Smets1993}
P.~Smets.
\newblock Belief functions: the disjunctive rule of combination and the
  generalized {B}ayesian theorem.
\newblock {\em International Journal of Approximate Reasoning}, 9(1):1--35,
  1993.

\bibitem{Smets2002}
P.~Smets.
\newblock Decision making in a context where uncertainty is represented by
  belief functions.
\newblock In R.~P. Srivastava and T.~J. Mock, editors, {\em Belief Functions in
  Business Decisions}, pages 316--332. Physica-Verlag, Heidelberg, 2002.

\bibitem{Smets2005}
P.~Smets.
\newblock Decision making in the {TBM}: The necessity of the pignistic
  transformation.
\newblock {\em International Journal of Approximate Reasoning}, 38(2):133--147,
  2005.

\bibitem{Strat1990}
T.~M. Strat.
\newblock Decision analysis using belief functions.
\newblock {\em International Journal of Approximate Reasoning},
  4(5--6):391--417, 1990.

\bibitem{vonNeumannMorgenstern1947}
J.~von Neumann and O.~Morgenstern.
\newblock {\em Theory of Games and Economic Behavior}.
\newblock Princeton University Press, Princeton, NJ, 2nd edition, 1947.

\bibitem{Wakker2000}
P.~P. Wakker.
\newblock {D}empster belief functions are based on the principle of complete
  ignorance.
\newblock {\em International Journal of Uncertainty, Fuzziness and
  Knowledge-Based Systems}, 8(3):271--284, 2000.

\bibitem{Walley1987}
P.~Walley.
\newblock Belief function representation of statistical evidence.
\newblock {\em The Annals of Statistics}, 15(4):1439--1465, 1987.

\bibitem{Zhouetal2018}
C.~Zhou, B.~Qin, and X.~Du.
\newblock A {S}avage-style utility theory for belief functions.
\newblock In {\em Proceedings of the twenty-seventh International Joint
  Conference on Artificial Intelligence (IJCAI-18)}, pages 5135--5141. AAAI
  Press, 2018.

\end{thebibliography}

\end{document}